\definecolor{darkred}{rgb}{0.6,0,0}
\setlist[enumerate,1]{%
  label=\arabic*.,
}
\newlist{inlinelist}{enumerate*}{1}
\setlist*[inlinelist,1]{%
  label=(\roman*),
}
\newtheorem{theorem}{Theorem}[section]
\newtheorem{lem}[theorem]{Lem}
\newtheorem{prop}[theorem]{Proposition}
\newtheorem{rem}[theorem]{Remark}
\newtheorem{ass}[theorem]{Assumption}
\newcommand{\transit}{\operatornamewithlimits{\it transit}}
\newcommand{\busy}{\operatornamewithlimits{\it busy}}
\def\proj{\mathsf{\Pi}}
\def \Sym {{\rm Sym}}
\def\ccalCS{{\ccalS_C}}
\def\ccalCU{{\ccalU_C}}
\def\PhiC{{\bbPhi_C}}
\def \edRVI{{\epsilon\text{-}\delta\text{RVI}}}
\title{Distributed Hierarchical Control for State Estimation With Robotic Sensor Networks}\date{}\author{Charles~Freundlich,~\IEEEmembership{Student~Member,~IEEE,}~Yan~Zhang,~\IEEEmembership{Student~Member,~IEEE,}
and~Michael~M.~Zavlanos,~\IEEEmembership{Member,~IEEE}
\thanks{%
Charles Freundlich, Yan Zhang, and Michael M. Zavlanos are with the Dept. of Mechanical Engineering and Materials Science, Duke University, Durham, NC 27708, USA {\tt\footnotesize \{charles.freundlich, yan.zhang2, michael.zavlanos\}@duke.edu}. This work is supported in part by the NSF award CNS \#1302284. A preliminary version of this work can be found in \cite{freundlich15acc}
}}
\begin{document}
\maketitle
\thispagestyle{empty}
\pagestyle{empty}

\begin{abstract}This paper addresses active state estimation with a team of robotic sensors.
The states to be estimated are represented by spatially distributed, uncorrelated, stationary vectors.
Given a prior belief on the geographic locations of the states, we cluster the states in moderately sized groups and propose a new hierarchical Dynamic Programming (DP) framework to compute optimal sensing policies for each cluster that mitigates the computational cost of planning optimal policies in the combined belief space. Then, we develop a decentralized assignment algorithm that dynamically allocates clusters to robots based on the pre-computed optimal policies at each cluster.
The integrated distributed state estimation framework is optimal at the cluster level but also scales very well to large numbers of states and robot sensors. 
We demonstrate efficiency of the proposed method in both simulations and real-world experiments using stereoscopic vision sensors.\end{abstract}

\begin{IEEEkeywords}
%Path Planning for Multiple Mobile Robot Systems,
%Localization,
%Distributed Robot Systems,
%Sensor Fusion
Sensor Networks,
Decision/Estimation Theory,
Distributed Algorithms/Control,
Optimal Control
\end{IEEEkeywords}

\section{Introduction}\label{sec:intro}\PARstart{R}{obotic} sensors rely on mobility to gather information.
Information acquisition can be subtask in a more complex robotic mission such
as SLAM, or the end goal in, e.g., geostatistical surveying, environmental sampling, or mapping missions.
%In the later cases, the robots' locations are known and the hidden states may be, e.g., liquid temperatures, gas concentrations, the dense reconstruction of a surface, or the location of fixed landmarks.
%
The goal of
this paper is to determine how a team of robots should
collect information so that the aggregate uncertainty in a finite
collection of hidden state vectors is minimized. Specifically,
given prior beliefs of the geographic location of the hidden
states, we seek an optimal sequence of observations which,
when fused with the prior beliefs, minimizes the estimation
uncertainty. This problem is known in the mobile robotics
literature as distributed state estimation.
 %
%This paper proposes a distributed and nonmyopic control law for sparse information-acquisition.
%In particular, the controller finds optimal, in terms of uncertainty reduction in clusters of hidden states and distance travelled by the robot, infinite-horizon sensing policies, then conducts a distributed auction to allocate the cluster-tasks to the team of robots.
The approach presented herein is similar to recent advancements in linear-Gaussian active sensing that operate in pose-covariance space \cite{atanasov14,agha15}, except that here we propose a novel belief-space discretization that admits exact value iteration and can be readily incorporated in a hierarchical multi-robot controller, enabling decentralized information acquisition of very large collections of hidden states by large teams of robots.
%In the remainder of this section, we discuss recent papers on POMDP, dec-POMDP, greedy, and TSP-like solutions to the automatic distributed state estimation, pointing out the similarities and differences with the approach discussed herein where appropriate.

%\subsection{Literature Review}\label{sec:lit_review}

A typical approach to active state estimation is to employ gradient descent methods to generate sensor trajectories and sequences of associated state observations that minimize an information theoretic objective of interest, such as the trace of the covariance matrix.
This is the approach followed, e.g., in mobile target tracking \cite{chung06,freundlich13cdc}, sparse landmark localization \cite{freundlich15acc, freundlich13icra,vanderhook15}, and active SLAM \cite{carrillo15,atanasov15,agha15}.
%A common assumption present in these works is that the posteriors of the object (or objects) being estimated is Gaussian.
%Under this assumption, typical choices of an information objective include the trace \cite{ponda09,freundlich15acc,freundlich13icra}, the determinant \cite{chung06}, or the maximum eigenvalue \cite{vanderhook15,freundlich16tac} of the posterior covariance matrix; for a discussion, cf. \cite{carrillo15,wenhardt07}.
%Intuitively, the trace captures the sum of the uncertainties of the hidden states, the determinant the volume of the confidence ellipsoids of the hidden states, and the maximum eigenvalue the worst-case error.
%Compared to existing nonmyopic methods, gradient descent approaches for active state estimation can accommodate larger numbers of sensors with more complex sensor dynamics or sensing models. 
%This has obvious benefits for planning and estimation.
%For example, gradient methods have recently been used to control a robot that hosts a stereo camera by differentiating explicit models of stereoscopic vision systems \cite{freundlich13icra,freundlich13cdc}.
%The resulting robot trajectories control jointly the viewing distance and angle and are shown to outperform commonly used heuristics.
%For discrete configuration and decision spaces, greedy selection of the set of observations from e.g., a redundant roadmap of candidate observations \cite{englot2011}, has been shown to perform well when the objective function is submodular \cite{guestrin2005near,hollinger12,jawaid15}.
Recently, \cite{carrillo15} have shown that information-theoretic objectives may fail even to be monotonic in many active sensing tasks, removing performance guarantees for greedy control.

%%% this paragraph can be moved to end ot lit review if there is room...  Its pretty short at 3 sentences 'We note..., Directly..., This approach...'
%%For example, Ding et al. \cite{ding2012coordinated} minimize the trace of the fused covariance by guiding multiple non-holonomic UAVs.
%%Nonetheless, gradient descent and greedy methods can often be extended to obtain long-term optimal policies.
%% without relying on submodularity.
%%For instance,
%%though they do not obtain the strong performance guarantees of Jawaid and Smith \cite{jawaid15}, Yu, Schwager, and Rus 
%We note briefly that active sensing policies can sometimes be generated without directly reasoning over posterior distributions by instead solving a resource allocation problem \cite{yu14,jawaid15}.
%%proposes a near optimal method for a more general class of information-theoretic objective functions, although without the strong performance guarantees of, e.g., \cite{jawaid15}.
%Directly relevant to this work, \cite{yu14} casts the active sensing problem as orienteering.
%This approach assumes that information on all hidden states can be inferred indirectly by visiting only a subset of those states and leveraging correlation.
%%In \cite{yu14}, multiple observations of the same hidden state vector are not needed.

When planning multiple observations, the robot may need to reason over the combinatorial set of future probability distributions (pdfs), efficiently represent them, and solve Bellman's equation.
This is known as the ``belief representation problem.''
This problem, in the context of information acquisition, has received a great deal of attention recently, both when the robot state is observable\cite{ryan2010particle,adurthi14,atanasov14,hollinger14sampling,LeNy09trajectory} and when it is only partially observable\cite{atanasov14nonmyopic,spaan14,agha2014firm,agha15,kurniawati2016online,seiler15,bai14,omidshafiei15,oliehoek16}.
The most widely used approach is to grow a tree with a prior distribution of the hidden states at the root, sampling in this way several sequences of possible future observations to obtain the set of reachable belief states at the leaves.
Once the tree has been constructed, one simply selects the leaf with the lowest cost and traces it back to the root to obtain the optimal policy.
%This approach is applicable even when the uncertainty model itself is unknown, as shown in \cite{liu16} for MDP models and in \cite{agha15} for POMDP models.
The horizon length can be set \emph{a priori} \cite{ryan2010particle,adurthi14,atanasov14,atanasov14nonmyopic} or, e.g., defined implicitly by a budget \cite{hollinger14sampling}.
% can be applied.
%Tree-based approaches to active sensing can be traced beck to \cite{ryan2010particle} were likely the first to address the belief representation problem in active sensing with t
%Tree-based methods was likely first used to address the belief representation problem in the context of active sensing by \cite{ryan2010particle}.
%Later, \cite{adurthi14} proposed a method that reasons over a subset of the belief space spanned by Gaussian mixtures, while \cite{hollinger14sampling} show how to avoid arbitrary horizon lengths by considering budget-constrained trajectories.
%In fact, until very recently, it seemed as though optimal control in this domain would always be limited to receding horizons because of tractability issues related to tree growth.
%%Huang, Zhou, Trawny, and Roumeliotis \cite{huang15} have only recently introduced a
%An efficient approximate algorithm that addresses multi-modality in the hidden state distribution for multi target tracking using passive sensors was only recently introduced in \cite{huang15}.
{
Note that nonmyopic active sensing problems, such as the problem addressed in this paper, implicitly exploit \emph{a priori} knowledge rather than exploring the environment to discover new features.
Exploration can be included as a first step as in \cite{atanasov14nonmyopic}, and incorporating this step within our approach is a subject of further research.
}

Two approaches that are fundamentally different from choosing a dynamic programming horizon are
\begin{inlinelist}
\item to represent the reachable belief space with a finite set in a clever way, typically by making an assumption about the family of distributions of the hidden states\cite{spaan14,freundlich15acc,atanasov14,agha2014firm,vanderhook15,LeNy09trajectory}, or
\item to avoid this representation problem altogether by working in policy space \cite{kurniawati2016online,seiler15,bai14}.
\end{inlinelist}
With respect to the latter, \cite{bai14} defines a \emph{generalized policy graph}, which nonetheless relies on belief space sampling.
In fact, some kind of sampling is at the core of all point-based approaches stemming from the seminal paper \cite{kurniawati08}.

Non-myopic active sensing for teams of decentralized robots often leads to decentralized Partially Observable Markov Decision Process POMDPs (dec-POMDPs) \cite{omidshafiei15, oliehoek16}.
To our knowledge, decentralization in this context refers to the execution of the planner, as its computation is always done offline at a central location due to the non-separable nature of the value function.
%If the robots are able to observe their own state, \cite{dames15} proposes a way to mitigate the large branching factor that arises when concurrently planning for all members of the team at once by solving receding horizon control problems sequentially for each member of the team given all previous robots' plans.
%Generally, any centralized planner can be extended for use in team-based robotic scenarios like this, essentially by sacrificing the optimality of planning concurrently for the tractability of planning sequentially. 
%Under linear-Gaussian assumptions, in Vander Hook \emph{et al.}'s method \cite{vanderhook15}, pairs of robots localize RF-ID tags.
%Each robot takes one observation such that the combined estimates drive the uncertainty below a threshold.
%For multiple landmarks and teams with more robots than two, the team must rendezvous to exchange information address task assignment.
Recently, \cite{atanasov16} used sequential planning for decentralized active SLAM.
Common in the vast majority of approaches discussed above is that mobile sensor planning relies on sampling-based strategies, e.g., forward search, for partially \cite{atanasov14nonmyopic,spaan14,agha2014firm,agha15,omidshafiei15, oliehoek16,kurniawati2016online,seiler15,bai14,spaan14}
and fully observable \cite{atanasov14,LeNy09trajectory} robots, or belief space sampling in the policy domain \cite{kurniawati2016online,seiler15,bai14}.
Sampling-based approaches will typically run into scalability issues due to one or more of the following reasons:
\begin{inlinelist}
\item sparsity of information in the environment, which forces longer planning horizons,
\item high dimensional unknowns, which make observation sampling inefficient, or
\item large teams of robots, which significantly increase the size of the action space.
\end{inlinelist}
To design an algorithm that avoids these pitfalls, in this paper we introduce a hierarchical approach 
%
%composed of a set of $M$ local DPs, which combine to form $P\ll M$ cluster DPs, and an online control synthesis method to allocate the $P$ clusters to the $N$ robots.
%Specifically, in Section~\ref{sec:st}, our main result is that, for linear-Gaussian systems, one can hand-craft the belief space offline (without sampling) in a way that ensures that the robot can discriminate effectively among actions and
%store the infinite horizon value function as a look-up table.
%In Section~\ref{sec:mt}, we show how to integrate local value functions from clusters of hidden states in order to formulate and solve DPs for each cluster.
%In Section~\ref{sec:many}, exact value functions for each of several clusters define macro-actions that are designed to be integrated during the  multi robot, multi cluster online control synthesis.
%By designing and combining the components of the hierarchical controller in this way, we 
%
that decomposes the set of $M$ states into $P\ll M$ clusters, designs optimal controllers for each cluster, and then allocates those controllers among the $N$ robots. Specifically, for every hidden state that needs to be estimated,
we define a local Dynamic Program (DP) in the joint state-space
of robot positions and state uncertainties that determines
robot paths and associated sequences of state observations that
collectively minimize the estimation uncertainty. Then, we divide the
collection of hidden states into clusters based on a prior belief
of their geographic locations, and, for each cluster, we define a
second DP that determines how far along the local optimal trajectories
the robot should travel before transitioning to estimating
the next hidden state within the cluster. Finally, a distributed
assignment algorithm is used to dynamically allocate controllers
to the robot team from the set of optimal control policies for every
cluster.
{
At the cluster level, the problem that we solve can be considered a generalization of the TSP in that the robot must observe high dimensional unknowns over an infinite horizon at each site.
This is important to note because our method incurs the same computational complexity as the TSP, $O(2^K)$, where $K$ is the number of hidden states in the cluster.
Moreover, the greedy extension of our method to multiple robots and multiple clusters only contributes constant complexity in the size of the robot team and in the number of clusters.
In this way, our approach represents an approximation algorithm to a broad class multi vehicle generalized TSPs for which no good approximation exists.
While we are not able to guarantee an optimality gap, we are able to bound computational complexity by limiting the cluster size, effectively ``dividing and conquering'' the hard problem into a number of tractable subproblems that can be solved exactly.
}
We are not aware of any other non myopic method in the literature that can handle as many hidden states and robot sensors.
We also illustrate these claims with experiments on real robots, which are a contribution in and of themselves as the first to demonstrate experimental multi-robot active sensing using stereo vision.

The paper is organized as follows: In Section~\ref{sec:pf}, we formulate the distributed state estimation problem addressed in this paper. 
In Sections~\ref{sec:st} and \ref{sec:mt} we propose a local and a cluster DP to obtain controllers that are optimal at the cluster level.  
Section \ref{sec:many} presents the distributed auction mechanism that can efficiently allocate clusters to the robots in real-time.
In Section~\ref{sec:sim}, we simulate large robot teams carrying stereo vision rigs that localize hundreds of sparse landmarks.
In Section~\ref{sec:exp}, we report experiments on a team of two robots localizing eight landmarks.

\section{Problem Formulation}\label{sec:pf}
Consider a team of $N$ mobile robots tasked with estimating $M$ hidden state vectors $\set{\bbx_i }_{i=1}^M\subset \reals^n$.
Let $\ccalW$ and $\ccalU$ denote the configuration and action spaces of a single robot, and let $\bbphi \colon \ccalW \times \ccalU \to \ccalW$ denote its (possibly nonholonomic) dynamical model.
{
In this paper, the set $\ccalW$ can be any finite discretization of the robot's configuration space, as long as transitions in that space can be assumed to be deterministic.
}
Assume that the robots have access to a Normal prior distribution over the hidden states, with means $\set{\hbbx_{i,0}}_{i=1}^M$ and covariances $\set{\bbSigma_{i,0}}_{i=1}^M$.
Let $\bby_{ij}$ denote the observation of $\bbx_i$ by robot $j$ that is corrupted by zero mean Gaussian noise, specifically,
$
\bby_{ij} = \bbx_i + \bbnu_{ij},
$
where $\bbnu_{ij} \sim \normal{ \bb0, \bbQ_{ij}}.$
We assume that we have a model of $\bbQ_{ij}$ that depends on both the state of the robot and the hidden vector.
{
We will thus write $\bbQ_{ij}$ as a (possibly discontinuous) mapping $\bbQ \colon \reals^n \times \ccalW \to \Sym_{++}(n, \reals),$ where $\Sym(n, \reals)$ denotes the set of symmetric matrices and the subscript denotes the restriction of this set to the set of positive definite matrices.
}
%We will also denote the closure of this set by $\Sym_+(n,\reals).$
Collectively, the team acquires a sequence of observations $\set{\bby_{ij,k}}$ with measurement error covariances $\set{\bbQ_{ij,k}}$ from various vantage points along controlled trajectories $\set{\bbp_{j,k}} \subset \ccalW$, where $k$ denotes a time index.
Hereafter, we will sometimes write $\bbQ(\bbx_i, \bbp_{j,k})$ instead of $\bbQ_{ij, k}$ to emphasize that $\bbQ$ is actually a function of $\bbx_i$ and $\bbp_{j,k}.$

%Then, a KF fuses observations as they are acquired to obtain a sequence of estimates and their associated error covariances.
%If a new observation of target $i$ is acquired at time $k$ by robot $j$, a KF local to robot $j$ fuses the prior belief $\normal{\hbbx_{i,k-1},\bbSigma_{i,k-1}}$ with the new observation $\normal{\bby_{ij,k},\bbQ_{ij,k}}$ to obtain the new target location distribution $\normal{\hbbx_{i,k},\bbSigma_{i,k}}$, where $\ccalN$ denotes the normal distribution.
Our goal is to minimize the variance in all hidden vectors as well as the distance the robots need to travel over the course of the controlled trajectories.
We denote by $\bbpsi \colon \ccalU \to \reals_{+}$ a metric that measures the distance an agent needs to travel as a result of actions in $\ccalU$.
Then, given a parameter $\rho  \in [0,1]$, we find a sequence of control inputs $\set{\bbu_{j,k}} \subset \ccalU$ that solve
\begin{subequations}\label{eq:orig_prob}
\begin{align}
&\max_{\set{\bbu_{j,k}} }   \sum_{k \in \mbN}  \! \gamma^k \bigg[(1\!-\! \rho)\! \sum_{i=1}^M    \! \sqrt{ \bbt\bbr \!\left(\bbSigma_{i,k} \!-\!\bbSigma_{i,k+1} \right)\! } - \!\rho \!\sum_{j=1}^N \bbpsi(\bbu_{j,k}) \bigg]
 \label{eq:reward_function}\\
&\st \; \bbp_{j,k+1} = \bbphi \left( \bbp_{j,k},\bbu_{j,k} \right) \; \forall j = 1, \dots, N
\label{eq:robot_dynamics}
 \\
& \bbSigma_{i,k+1} =
\begin{cases}
 \left(  \bbSigma_{i,k}^{-1} + \bbQ_{ij,k}^{-1} \right)^{-1} &\text{robot $j$ observes $\bbx_i$}\\
   \bbSigma_{i,k} &\text{else}
\end{cases}, \label{eq:covariance_dynamics}
\end{align}
\end{subequations}
with initial conditions $\set{\bbp_{j,0}}_{j=1}^N$ and prior error covariance $\set{\bbSigma_{i,0}}_{i=1}^M $.
For each stage $k$, the expression inside the square brackets in \eqref{eq:reward_function} is a trade off between variance reduction and distance traveled.
The parameter $\rho$ controls this tradeoff.
The square root in this expression is used to compare equal units.
Setting the discount factor $\gamma <1$ ensures that the value function for the infinite horizon problem remains finite.
Equation \eqref{eq:robot_dynamics} explicitly constrains the robot poses by the dynamics $\bbphi$, while equation \eqref{eq:covariance_dynamics} constraints the covariance dynamics by the Kalman Filter (KF) update for stationary hidden states.

The developments in the remainder of this paper toward solving problem~\eqref{eq:orig_prob} rely on the following assumptions: 
\begin{ass}
In this work we assume that the hidden state vectors are sparse and so are the observations.
This means that even if the robot passively observes multiple hidden vectors at once, our plans are only optimal with respect to reducing the uncertainty of one at a time.
\end{ass}
\begin{ass}
%[Deterministic Terminal Covariance]
%In practice, each covariance matrix $\bbSigma_{i,k} $ depends on $\set{ \bbQ_{ij, \ell} \mid \ell \le k}_{j=1}^M$, which in turn depend on error-prone intermediate estimates $\set{ \hbbx_{i,\ell} \mid \ell < k},$ introducing stochasticity.
We assume that we have access to a noisy prior distribution $\ccalN (\hbbx_{i,0}, \bbSigma_{i,0})$ for each hidden vector.
We assume that the hidden vector is a stationary process, thus we use the prior mean (along with the sensor configuration) to evaluate the observation uncertainty for all time $k \ge 0$.
% and the proposed sensor trajectories $\set{\bbp_{j,\ell} \mid \ell \le k}_{j=1}^M$.
%In other words, at time $k=0$, we compute a plan that essentially assumes $\mbE \left[ \bbQ_{ij, k} \mid \hbbx_{i,0} \right] = \bbQ \left( \hbbx_{i,0}, \bbp_{j,k} \right)$ for all $k \ge 0,$ rendering the matrices $\bbSigma_{i,k} $ deterministic.
Similar to the formulation in \cite{atanasov14,LeNy09trajectory}, this means that the dynamics of the error covariance matrices are deterministic, cf. \eqref{eq:orig_prob}.
This also implies that $\bb0 \preceq \bbSigma_{i,k+1}\preceq\bbSigma_{i,k}$ under KF dynamics.
%We emphasize that these assumptions are for planning purposes; during system operation, the estimates and covariances are stochastic and the optimal policy is used as-is in the the fully observable current (but possibly unexpected) state.
\end{ass}

\section{Localization of a Single Target}\label{sec:st}
In this section, we propose a method to solve problem~\eqref{eq:orig_prob} for $N=M=1,$ thus we drop the references to $i$ and $j$.
We call this the \emph{local DP}. 
% solve $M$ instances of the local DP, one for every target, and the obtained local solutions constitute the inputs to the cluster DP discussed later in Section~\ref{sec:mt}.
To construct the local state-space, we approximate the space of reachable covariances by a finite set $\ccalC$.
Then, we define the local state space to be the product space $\ccalS \triangleq \ccalW \times \ccalC.$
%
%The finite set $\ccalC$ is a component of the local state-space
%\begin{subequations}
%\begin{align}
%\ccalS &\triangleq \ccalW \times \ccalC = \set{\pair{\bbp,\bbSigma} \mid \bbp \in \ccalW\; {\rm and } \;  \bbSigma \in \ccalC},
%\end{align}
%\end{subequations}
%where $\ccalW$ denotes the discrete set of robot configurations in the vicinity of the target.
We discuss the specifics of designing $\ccalC$ in Section~\ref{sec:cog}.
A state $\bbs_{k} \triangleq (\bbp_{k}, \bbSigma_{k}) \in \ccalS$ is reachable from $\bbs_{k-1} \in \ccalS$ if there exists a control input $\bbu \in \ccalU$ 
%{\blue Do we need to explicitly give the action set in each specific problem?} 
that satisfies the joint dynamical equation $\bbPhi \colon \ccalS \times \ccalU \to \ccalS$, given by
%\begin{subequations}
\begin{align}\label{eq:joint_dynamics}
\bbs_k \!=\!
 \bbPhi(\bbs_{k\! -\! 1}, \bbu) \!=\! \left(
  \bbphi \left( \bbp_{k\! -\! 1} ,\bbu \right) 
  ,
%\label{eq:dynam}
\proj_{\ccalC}   \left[  \bbSigma^{\! -1}_{k-1}  + \bbQ_{k\! - \!1} ^{- \!1} \right]\!^{- \!1}
\right)
,
%\label{eq:transition}
\end{align}
%\end{subequations}
where $\proj_{\ccalC} \colon \Sym_{++}(n, \reals) \to \ccalC$.
We give exact details of this projection in Section~\ref{sec:cog} and provide an example of the state-space transitions in Fig.~\ref{fig:discrete_actions}~(a).
Projection in \eqref{eq:joint_dynamics} is necessary because we require that $\bbSigma_k \in \ccalC$, which is a finite subset of $\Sym_{++}(n, \reals)$.
The function $\bbPhi$ constitutes the transition function for the local DP.
Then, denote by $R \colon \ccalS \times \ccalU \to \reals$ the instantaneous reward from problem \eqref{eq:orig_prob}, given by
\begin{align}\label{eq:reward}
R({\bbs_k},\bbu) &\triangleq  (1-\rho) \left(\bbt\bbr \left[\bbSigma_{k-1}- \bbSigma_{k} \right] \right)^{1/2}  -\rho\bbpsi(\bbu).
\end{align}
In the remainder of this section, we will design a state space that is small enough for exact value leading to a desired stationary optimal policy $\bbmu^* : \ccalS \to  \ccalU$.
{
In particular, a \emph{stationary optimal policy} is one that depends only on the current state of the system.
}

\subsection{The Uncertainty State-Space and Transition Function}\label{sec:cog}
%If $\ccalS$ is manageable in size, Algorithm~\ref{alg:VI} converges quickly.
In this section, we discretize $\Sym_{++}(n,\reals)$ to design the finite set $\ccalC.$
We emphasize that optimally sampling bounded subsets of $\Sym(n,\reals)$ is an interesting and deep problem \cite{hardin2004discretizing}, and we do not provide a general framework.
Our method works well for representing a specific bounded region of $\Sym(n , \reals),$ which we call the \emph{reachable covariance matrices} for the problem described herein.

\begin{figure}[t]
    \centering
    \begin{tabular}{c c}
    \includegraphics[width=2.6cm]{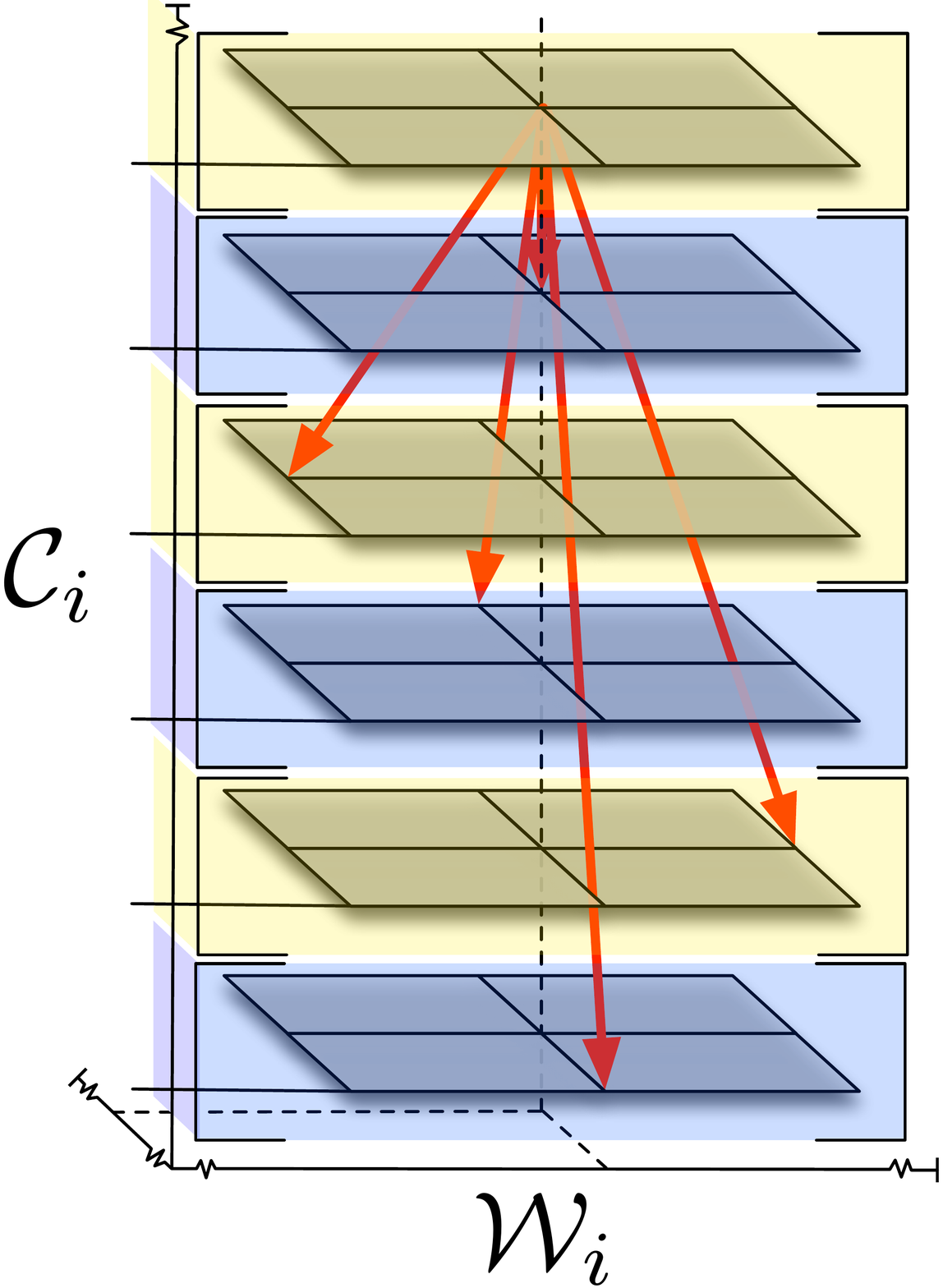} 
    &
    \includegraphics[width=3.5cm]{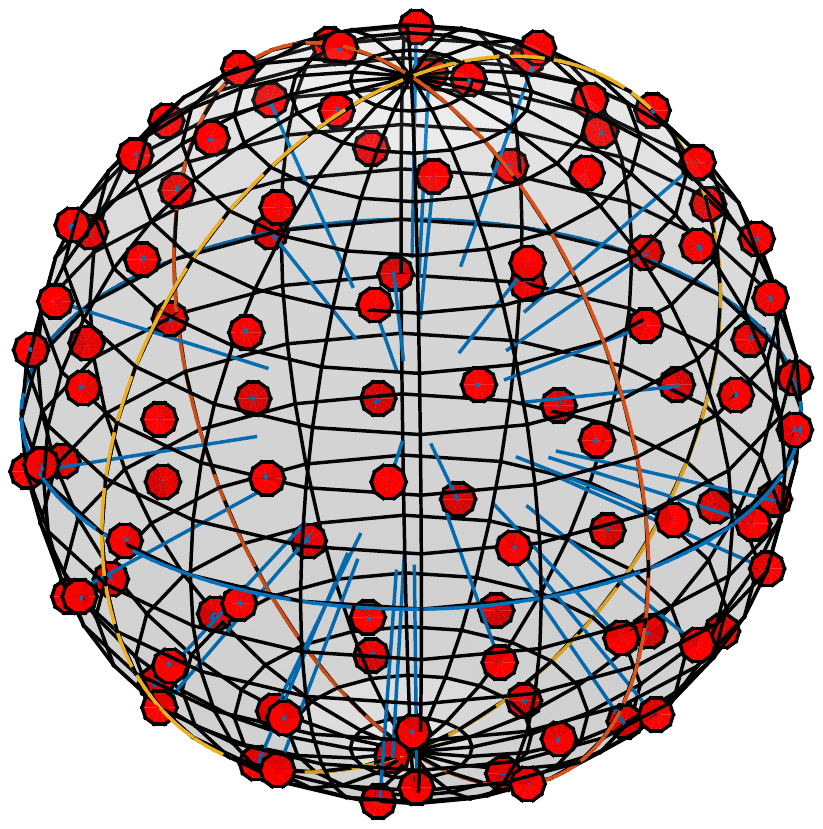} 
    \\
    (a) & (b)
    \end{tabular}
    \caption{
        (a) An illustration of the state-space and transition for a local DP where $\ccalW$ is a NWSE grid and actions are to take an image and move north, south, east, west, or remain stationary.
        Red lines are drawn to represent each action, showing the transition of the state both in $\ccalW$ and in $\ccalC$.
        The vertical axis represents the discrete nature three dimensional $\ccalC$ using colored regions, which are each represented by a single matrix in $\ccalC$.
        (b) A fifty point discretization of $\mbR\mbP^{2}$ is represented by allowing 100 simulated charges confined to the 2-sphere $\mbS^2$ to come to equilibrium and discarding the points below an arbitrary equator.
Charges are red dots, and the net electric force on each is plotted in blue.
    }
    \label{fig:discrete_actions}
    \vspace{-0.5cm}
\end{figure}

We begin by the following lemmas; proofs are omitted due to space limitations.
%which will be useful for determining a bound on the trace of the error covariance matrices that are reachable in our state-space by any sequence of observations that the sensor can make.
\begin{lem}\label{lem:pre-fusion}
Let $\bbI$ denote the $n \times n$ identity matrix and let $\bbC \in \Sym_{++}(n,\reals)$.  Then, $\bbI-(\bbI+\bbC)^{-1} \in \Sym_{++}(n,\reals)$.
\end{lem}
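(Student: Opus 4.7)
The plan is to use the spectral theorem to reduce the claim to a scalar inequality. Two things need to be established: symmetry of $\bbI-(\bbI+\bbC)^{-1}$ and strict positive definiteness.

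First, I would verify symmetry. Because $\bbC\succ 0$, we have $\bbI+\bbC \succeq \bbI \succ 0$, so $\bbI+\bbC$ is invertible, and the inverse of a symmetric invertible matrix is symmetric. Therefore $(\bbI+\bbC)^{-1}$ is symmetric, and so is $\bbI-(\bbI+\bbC)^{-1}$.

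Next, I would diagonalize $\bbC = Q\Lambda Q^{T}$, where $Q$ is orthogonal and $\Lambda=\mathrm{diag}(\lambda_{1},\ldots,\lambda_{n})$ has strictly positive diagonal entries. Since $\bbI = Q Q^{T}$ is trivially diagonalized by the same $Q$, and any rational function of $\bbC$ is diagonalized by $Q$ as well, we obtain
\[
\bbI-(\bbI+\bbC)^{-1} \;=\; Q\bigl[\bbI-(\bbI+\Lambda)^{-1}\bigr]Q^{T}.
\]
The bracketed matrix is diagonal with entries $1-1/(1+\lambda_{i})=\lambda_{i}/(1+\lambda_{i})$, each of which is strictly positive because $\lambda_{i}>0$. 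This exhibits $\bbI-(\bbI+\bbC)^{-1}$ as an orthogonal similarity of a diagonal matrix with positive entries, hence as an element of $\Sym_{++}(n,\reals)$.

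I do not anticipate any real obstacle: the entire argument collapses to the scalar observation that $\lambda/(1+\lambda)>0$ whenever $\lambda>0$. A pleasantly short alternative would be to rewrite $\bbI-(\bbI+\bbC)^{-1}=(\bbI+\bbC)^{-1}\bbC$, observe that $\bbC$ commutes with $(\bbI+\bbC)^{-1}$, and invoke the standard fact that the product of two commuting positive definite matrices is again symmetric and positive definite. Either route is a few lines, which is consistent with the authors omitting the proof due to space limitations.
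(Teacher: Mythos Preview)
Your proof is correct. The paper explicitly omits the proof of this lemma due to space limitations, so there is no author-supplied argument to compare against; your spectral-theorem reduction to the scalar inequality $\lambda/(1+\lambda)>0$ is a standard and complete justification, and your alternative via $(\bbI+\bbC)^{-1}\bbC$ with commuting factors is equally valid.
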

%\begin{proof}
%Suppose there exists $\bby \in \reals^n \setminus \set{\bb0}$ such that $\bby^\top (\bbI-(\bbI+\bbC)^{-1} ) \bby \leq 0$.
%Let $\bbz = (\bbI+\bbC)^{-1/2} \bby$.  The previous inequality in $\bby$ and $\bbC$ implies that
%$
%\bby^\top \bby \leq \bby^\top (\bbI+\bbC)^{-1} \bby.$
%Substituting in $\bbz$, we get
%$\bbz^\top (\bbI+\bbC) \bbz \leq \bbz^\top \bbz$, which would imply that
%$\bbz^\top \bbC \bbz \leq 0,$
%which contradicts the hypothesis that $\bbC \in \Sym_{++}(n,\reals)$.
%\end{proof}

\begin{lem}[Lemma 2.7, \cite{trace_ineq}] \label{lem:fusion}
Let $n \in \mbN$.
%Denote by $\Sym_{++}(n,\reals)$ the set of $n \times n$ symmetric positive definite matrices.
If $\bbA,\bbB \in \Sym_{++}(n,\reals)$, then 
$
{\bf tr} \left( \bbA^{-1} +\bbB^{-1} \right)^{-1} < {\bf tr} \bbA.
$
\end{lem}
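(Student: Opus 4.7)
The plan is to deduce the trace inequality from a strict Loewner ordering $(\bbA^{-1}+\bbB^{-1})^{-1} \prec \bbA$, which reduces the problem to applying the previous lemma after a congruence transformation. Concretely, I would first factor
\begin{equation*}
\bbA^{-1}+\bbB^{-1} \;=\; \bbA^{-1/2}\bigl(\bbI + \bbA^{1/2}\bbB^{-1}\bbA^{1/2}\bigr)\bbA^{-1/2},
\end{equation*}
which is valid because $\bbA \in \Sym_{++}(n,\reals)$ admits a symmetric positive definite square root. Inverting both sides gives
\begin{equation*}
\bigl(\bbA^{-1}+\bbB^{-1}\bigr)^{-1} \;=\; \bbA^{1/2}\bigl(\bbI + \bbC\bigr)^{-1}\bbA^{1/2},
\end{equation*}
where $\bbC \triangleq \bbA^{1/2}\bbB^{-1}\bbA^{1/2}$.

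Next I would verify that $\bbC \in \Sym_{++}(n,\reals)$. Symmetry is immediate from the form of the expression, and positive definiteness follows because $\bbB^{-1}$ is positive definite and $\bbA^{1/2}$ is invertible, so the congruence $\bbx^{\top}\bbC\bbx = (\bbA^{1/2}\bbx)^{\top}\bbB^{-1}(\bbA^{1/2}\bbx) > 0$ for every nonzero $\bbx$. With $\bbC$ thus placed into the hypothesis of Lemma~\ref{lem:pre-fusion}, I obtain $\bbI - (\bbI+\bbC)^{-1} \in \Sym_{++}(n,\reals)$. Conjugating by $\bbA^{1/2}$ on both sides preserves positive definiteness, yielding
\begin{equation*}
\bbA - \bigl(\bbA^{-1}+\bbB^{-1}\bigr)^{-1} \;=\; \bbA^{1/2}\bigl[\bbI-(\bbI+\bbC)^{-1}\bigr]\bbA^{1/2} \;\in\; \Sym_{++}(n,\reals).
\end{equation*}

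Finally, I would pass from the Loewner inequality to the trace inequality by invoking the fact that the trace of a strictly positive definite matrix is strictly positive (it equals the sum of strictly positive eigenvalues). Applied to the difference above, this gives $\bbt\bbr(\bbA) - \bbt\bbr\bigl((\bbA^{-1}+\bbB^{-1})^{-1}\bigr) > 0$, which is exactly the claim. I do not anticipate a genuine obstacle here; the only subtlety is that one must use the \emph{strict} version of Lemma~\ref{lem:pre-fusion} to conclude a strict trace inequality, and the congruence rewriting is the one nonobvious maneuver that lets the scalar‑shift lemma handle the general two‑matrix case.
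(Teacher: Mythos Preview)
Your argument is correct. The congruence factorization through $\bbA^{1/2}$ reduces the two-matrix statement to the one-matrix case handled by Lemma~\ref{lem:pre-fusion}, and every step (positive definiteness of $\bbC$, preservation of strict positive definiteness under congruence by the invertible $\bbA^{1/2}$, and strict positivity of the trace of a positive definite matrix) is valid.

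As for comparison: the paper does not actually supply a proof of this lemma---it is stated with a citation to~\cite{trace_ineq} and the surrounding text explicitly says the proofs are omitted for space. So there is no in-paper argument to compare against. That said, your route is clearly the one the paper's presentation anticipates: Lemma~\ref{lem:pre-fusion} is stated immediately before and serves no other purpose in the paper, so reducing Lemma~\ref{lem:fusion} to it via the $\bbA^{1/2}$-congruence is exactly the intended chain. Your proof would slot in cleanly if the authors had chosen to include one.
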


The first implication of Lemma~\ref{lem:fusion} is that the trace of the largest instantaneous covariance bounds the maximum eigenvalue of the reachable covariance matrices.
Define this bound as
{
\begin{equation}
\lambda_\text{max} \le \max_{ \bbp \in \ccalW} \; \bbt\bbr \;\bbQ (\hbbx, \bbp) ,
\end{equation}
where we have written the measurement covariance as a function of the prior mean of the state estimate $\hbbx \in \reals^n$, which is fixed during the planning phase, and the configuration of the sensor $\bbp \in \ccalW.$
}
Lemma~\ref{lem:fusion} also implies that the trace is decreasing with additional independent measurements.
Therefore, the set of reachable covariances is bounded.
Lemma ~\ref{lem:fusion} and the resulting bound can be used to obtain a discretization of the set of possible maximum eigenvalues for the reachable covariance matrices.
In particular, we define the logspace set of maximum eigenvalues
\begin{equation} \label{eq:mu}
\ccalL  \triangleq \set{   {\lambda_\text{max}} e^{\kappa_{\ccalL}  \left( i-N_{\ccalL} \right)/N_{\ccalL}} \mid i=1, \dots ,N_{\ccalL} }\subset \reals_{++},
\end{equation}
where $N_{\ccalL}$ is the cardinality of $\ccalL$ and $\kappa_{\ccalL}$ is a sampling gain that controls how clustered the samples are toward zero.
Note that ${\lambda_\text{max}}$ is the maximal element of $\ccalL$.
In \eqref{eq:mu}, we sample in logspace as a heuristic; we have found empirically that the maximum eigenvalues of the filtered covariance matrices accumulate near zero.

To obtain a scalable discretization of the space of covariance matrices $\Sym_{++}(n , \reals),$ we assume that $\lambda_\text{max} (\bbSigma)$ and its corresponding eigenvector are more important than any one of the other eigenvalues and eigenvectors.
%Therefore, in our problem-specific discretization of $\Sym_{++}(n, \reals),$ we sacrifice sampling fidelity of the smaller eigenvalues and the corresponding eigenvectors to gain computational efficiency.
In particular, we assume that, in comparison to $\lambda_\text{max},$ 
the other eigenvalues are roughly equal,
%we have $ \lambda_2 \approx \cdots \approx \lambda_n,$ 
thus all other eigenvalues can be parameterized by a number in the half open interval $\alpha \in (0,1] \subset \reals$ such that $\lambda_i \approx \alpha \lambda_\text{max}$ for all $i = 2, \dots, n.$
This choice alleviates the need to independently consider all possible combinations of eigenvectors corresponding to the nonprincipal eigenvalues.
Define the set of ratios, which can be thought of as the set of possible inverse condition numbers, as
\begin{equation} \label{eq:ratiovec}
{\ccalA } \triangleq \set{ e^{\kappa_{\ccalA }  \left( i-N_{\ccalA } \right)/N_{\ccalA }} \mid i =1, \dots, N_{\ccalA } }\subset (0,1].
\end{equation}
In \eqref{eq:ratiovec}, $N_{\ccalA }$ is the number of eigenvalue ratios we sample and $\kappa_{\ccalA }$ is a sampling gain that controls the ellipticity of the confidence region associated with $(\lambda,\alpha) \in \ccalL\times\ccalA.$
Again, the logspace discretization is a heuristic based on experience; KF error covariance matrices produced using robotic sensors are typically cigar-shaped, i.e., dominated by uncertainty in the direction of the principal eigenvector.
%Although this discretization will not work in every situation, in the localization tasks that are investigated in Sections~\ref{sec:sim} and \ref{sec:exp}, measurement uncertainty is largest in the viewing direction, resulting in cigar-shaped covariance ellipsoids that inspired the features proposed herein.

%\begin{figure}[t]
%\centering
%\includegraphics[width=0.6\linewidth]{code_local-cluster/charges} 
%\caption{
%A fifty point discretization of $\mbR\mbP^{2}$ is represented by allowing 100 simulated charges confined to the 2-sphere $\mbS^2$ to come to equilibrium and discarding the points below an arbitrary equator.
%Charges are red dots, and the net electric force on each is plotted in blue.
%}
%\label{fig:charges}
%\end{figure}

The set of possible principal eigenvectors is equivalent to the set of lines passing through the origin, known as the real projective space $\mbR \mbP^{n-1}\!\!\!\!\!.\;$
Let $N_\ccalT$ be the number of samples needed to capture these possible directions for the principal eigenvalue.
Because $\mbR \mbP^{n-1}$can be formed by identifying antipodal points on any sphere, this problem can be approximately solved by placing $2N_\ccalT$ point charges on a sphere of radius $\sqrt{{\lambda_\text{max}}},$ allowing them to move until the ``electrostatic forces'' among them come to equilibrium, cutting the sphere along any equator, discarding one of the hemispheres, and saving the unit directions to each ``charge'' location on the other hemisphere.
It is straightforward to build such a simulation, and for brevity we do not provide the specifics here.
The result for 100 charges on the unit 2-sphere is shown in Fig.~\ref{fig:discrete_actions}~(b).

%For a given $\lambda \in \ccalL$, the set $\set{\lambda} \times \ccalA \times \ccalT$ defines a collection of $N_\ccalA +N_\ccalT$ ellipsoids centered at the origin who's major axes are parallel with the directions in $\ccalT$.
%The length of the major axes is $\sqrt{\lambda}.$
%Thus, t
To force the sampling density to be consistent, defined in terms of the surface area of the sphere used in the simulation, we create a set of sets $\set{\ccalT_\lambda \mid \lambda \in \ccalL}$; each element $\ccalT_\lambda$ is the set of unit vectors produced by the simulation using $\sqrt{\lambda}$ as the radius of the sphere.
%Let $\ccalT_\beta$ be the set of possible principal eigenvalue directions corresponding to the hemisphere with radius $\sqrt{\beta}$.
The number of elements in the largest, i.e., the set with the most elements, of these sets $\ccalT_{\lambda_\text{max}}$ is set to a user-specified number $N_{\ccalT_{\lambda_\text{max}}}$.
The remaining sets $\set{ \ccalT_\lambda \mid \lambda \in \ccalL}$ correspond to the other $N_{\ccalL} -1$ possible principal eigenvalues and, since ${\lambda_\text{max}}$ is the maximal element of $\ccalL,$ the sets $\set{\ccalT_\lambda \mid \lambda \neq {\lambda_\text{max}}}$ must have fewer elements than $\ccalT_{\lambda_\text{max}}$ so that the sampling density is the same.
In particular, for some $\lambda \in \ccalL,$ the set $ \ccalT_\lambda$ has
$
\ceil{
\frac{\lambda}{{\lambda_\text{max}}}
N_{\ccalT_{\lambda_\text{max}}} 
}$ elements, where the ceiling function is used to ensure that the number of elements in $\ccalT_\lambda$ is positive.

Using the sets $\ccalL, \ccalA,$ and $ \set{\ccalT_\lambda\mid \lambda \in \ccalL},$ we can create a discretization of $\Sym_{++} (n, \reals)$.
The result, interpreted geometrically, is $N_\ccalL$ concentric sets of cigar-shaped confidence ellipsoids with a variety of major diameters, defined by $\lambda \in \ccalL,$ ellipticities $\alpha \in \ccalA,$ and orientations $\bbu \in \ccalT_\lambda.$
The full covariance space can thus be described with a map $f \colon \reals^{2+n} \to \Sym (n, \reals)$ given by 
\begin{equation}\label{eq:covmap}
f(\lambda, \alpha,\bbu)= \lambda \mat{\bbu & *}
\mat{1 & \bb0 \\ \bb0 & \alpha\bbI_{n-1}} \mat{\bbu & *}^\top,
\end{equation}
where $*$ is any basis completion for $\reals^n$, and $\bbI_{n-1}$ is the identity matrix.
The function $f$ essentially builds a covariance matrix from the parameters supplied by $\ccalA$ and $\set{(\lambda, \ccalT_\lambda) \mid \lambda \in \ccalL}.$
{We define the covariance space as
\begin{equation}
\ccalC \triangleq
\set{\bb0} \cup f \left( \ccalA \times \bigcup\nolimits_{\lambda \in \ccalL} 
\left( 
\{ \lambda 
\}\times \ccalT_\lambda \right) \right).
\end{equation}}
%
%by defining first the finite product space
%\[
%\bar{\ccalC} 
%\triangleq
%\ccalA \times \bigcup\nolimits_{\lambda \in \ccalL} 
%\left( \lambda \times \ccalT_\lambda \right),
%\]
%along with the map $f \colon \reals^{2+n} \to \Sym_{++} (n, \reals)$ defined for some triplet $(\alpha, \lambda, \bbu) \in \bar{\ccalC} \subset \reals^{2+n}$ as
%\begin{equation}\label{eq:covmap}
%f(\alpha, \lambda, \bbu)= \lambda \mat{\bbu & *}
%\mat{1 & \bb0 \\ \bb0 & \alpha\bbI_{n-1}} \mat{\bbu & *}^\top,
%\end{equation}
%where $*$ is any basis completion for $\reals^n$, and $\bbI_{n-1}$ is the identity matrix.
%The image of $\bar{\ccalC}$ under this mapping is the covariance space, i.e., 
%\[
%\ccalC \triangleq f (\bar{\ccalC}) \cup \set{\bb0}.
%\]
\noindent The $\bb0$ covariance is an artificial state that we include in $\ccalC $ to denote that no more uncertainty remains in the variable being estimated, i.e., estimation is complete to the user-specified tolerance, defined as $\frac{1}{2} \min \set{\lambda \in \ccalL}.$

The projection operator $\proj_{\ccalC }$ guarantees that the fusion of the current covariance state $\bbSigma_k$ and the new measurement covariance $\bbQ_k$ is a member of $\ccalC.$
In particular, for some $\bbSigma \in \Sym_{++} (n ,\reals),\proj_{\ccalC}$ first computes the principal eigenvalue $\lambda_\text{max}$ and its corresponding normalized eigenvector $\bbu_\text{max}$.
Then, it rounds $\lambda_\text{max}$ to the closest element in $\ccalL \cup \set{0}.$
Call this map $\proj_\ccalL : \Sym(n,\reals) \to \ccalL \cup\set{0}.$
If $\proj_\ccalL(\bbSigma)$ is nonzero,
there will be some $\lambda ' \in \ccalL$ that is closest to $\lambda_\text{max},$
and $\proj_\ccalC$ then finds the element $\bbu' \in \ccalT_{\lambda'}$ that forms the largest magnitude inner product with $\bbu_\text{max}.$
Call this map $\proj_{\ccalT_{\lambda'}} : \Sym(n,\reals) \to \mbS^{n-1},$
where $\mbS^{n-1} = \set{\bbu \in \reals^n \mid \left\langle\bbu,\bbu \right\rangle =1}.$
In particular,
$
\bbu' =
\proj_{\ccalT_{\lambda'}}(\bbSigma) \triangleq \max_{\bbu \in \ccalT_{\lambda'}} \abs{\left\langle\bbu, \bbu_\text{max} \right\rangle }.
$
Finally, $\proj_\ccalC$ computes the ratio of $\lambda_\text{min}(\bbSigma)$ with $\lambda_\text{max}(\bbSigma)$ and finds the closest element $\alpha' \in \ccalA$ to that ratio.
Call this map $\proj_\ccalA : \Sym_{++}(n ,\reals) \to (0,1].$
By this construction, it holds that $(\alpha', \lambda', \bbu') \in \ccalA \times \bigcup\nolimits_{\lambda \in \ccalL} 
\left( \lambda \times \ccalT_\lambda \right),$ 
so that its image of this triplet under $f$ from \eqref{eq:covmap} is guaranteed to be a matrix in $\ccalC$.
In particular, the projection map is given by
$
\proj_\ccalC (\bbSigma) = f 
\left(
\proj_{\ccalL } \left(\bbSigma \right), 
\proj_{\ccalA} \left(\bbSigma\right), 
\proj_{\ccalT_{\proj_{\ccalL } \left(\bbSigma \right)}}(\bbSigma) 
\right),
$

\section{Localization of Multiple Targets}\label{sec:mt}
Assume the collection of all hidden states discussed in Section~\ref{sec:pf} is divided into clusters.
Temporarily let $M$ denote the number of hidden states in a particular cluster.
Denote by $\ccalS_i = \ccalW_i \times \ccalC_i$ the state space local to the $i$-th hidden vector in the cluster.
Each $\ccalS_i$ has 
$ 
N_{\ccalS_i }=
N_{\ccalW_i }
\left(
    1 + N_{\ccalA_i } \sum_{\lambda \in \ccalL_i} N_{\ccalT_\lambda} 
\right)
$
individual states.

%In this section, we use the solutions of the local DPs discussed in Section~\ref{sec:st} to develop a cluster DP that, for every cluster, determines for how long the robot should follow the local optimal policies for every hidden vector in that cluster before transitioning to a new hidden vector in the cluster.
%Essentially, the goal of the cluster DP is to balance between the time spent estimating any hidden state in the cluster and the total distance traveled in order to best estimate all hidden states in the cluster.
%In this section we describe the cluster state-space, transition function, and reward function.
%%These functions can be used as inputs to Algorithm~\ref{alg:VI} in a similar way as in Section~\ref{sec:st}.

Let $\ccalE_i\subset \ccalS_i$ denote the set of initial states that the robot can visit when it first arrives at $\ccalW_i$ to observe $\bbx_i.$
We assume that the first observation of $\bbx_i$ will occur at the boundary of the convex hull of local pose space $\partial \ccalW_i.$
Therefore, we define the \emph{entry points} to the $i$ local state space as
$
\ccalE_i \triangleq \set{(\bbp, \bbSigma_{i,0}) \mid \bbp \in \partial \ccalW_i}.
$
Let us index the states in the set $\ccalE_i$ using integers $ j \in \set{1, \dots, \abs{ \ccalE_i}}$ such that
$
j \overset{\text{1-to-1}}{\longmapsto} \bbs_j \in \ccalE_i .
$
Since in the neighborhood of every hidden state, the robots follow the local optimal policy determined in Section~\ref{sec:st}, a robot that begins observing the $i$-th hidden state at the entry point $\bbs_j \in \ccalE_i$ and has spent $k$ steps at that particular state has a known global location and local covariance matrix.
To keep track, let $\bbPhi_i \colon \ccalS_i \times \ccalU_i \to \ccalS_i$ and 
$\bbmu_i^* \colon \ccalS_i \to \ccalU_i$
denote the $i$-th local transition function from \eqref{eq:joint_dynamics} and optimal policy.
Then, define the $k$-times recursive local optimal transition function as
$\bbPhi_i^{*k} \colon \ccalE_i \to \ccalS_i$, given by
\begin{align}
\bbPhi_i^{*0}(\bbs_j) & = \bbs_j \nonumber\\
\bbPhi_i^{*1}(\bbs_j) & = \bbPhi_i\left(\bbs_j,\bbmu_i^*(\bbs_j )\right) \nonumber \\[-2ex]
%\bbPhi_i^{*2}(\bbs_j) &=  \bbPhi_i\left(\bbPhi_i^{*1}(\bbs_j),\bbmu_i^*(\bbPhi_i^{*1}(\bbs_j) )\right)  \nonumber \\[-2ex]
%\bbPhi_i^{*3}(\bbs_j) &=  \bbPhi_i\left(\bbPhi_i^{*2}(\bbs_j),\bbmu_i^*(\bbPhi_i^{*2}(\bbs_j) )\right)   \nonumber\\[-1ex]
&\vdots  \nonumber \\[-2ex]
\bbPhi_i^{*k}(\bbs_j) &=  \bbPhi_i\left(\bbPhi_i^{*k-1}(\bbs_j),\bbmu_i^*\left(\bbPhi_i^{*k-1}\left(\bbs_j \right) \right)\right) .\label{eq:ktrans}
\end{align}
The function $\bbPhi_i^{*k}$ denotes the local state in $\ccalS_i$ in which the robot will land when starting observing $\bbx_i$ at from entry point $\bbs_j \in \ccalE_i$ and after following the local optimal policy $\bbmu_i^*$ for $k$ time steps. 

In the cluster DP, there are $M$ available actions, one for every hidden state in the cluster.
In particular, for a robot observing $\bbx_i$, action $\bbu_i$ is simply to continue observing the same state along the local optimal policy.
Note that, since the optimal policy is stationary, there is always a local optimal action to take.
The following proposition shows that any local optimal trajectory reaches an absorbing state so that the iteration $k \mapsto k+1$ terminates.
\begin{prop}\label{prop:truncate}
$\forall i \in \set{1, \dots, M}$ and $\bbs_j \in \ccalE_i$, there exists $K_i \in \mbN$ such that $\bbPhi_{i}^{*K_i}(\bbs_j) = \bbPhi_{i}^{*k}(\bbs_j) \; \forall k \geq K_i$.
\end{prop}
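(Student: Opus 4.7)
My plan is to leverage the finiteness of the local state space together with the contractive Kalman fusion established in Lemma \ref{lem:fusion}. Because $\ccalW_i$ and $\ccalC_i$ are both finite, the product $\ccalS_i$ is finite; and because $\bbmu_i^*$ is a deterministic stationary policy and $\bbPhi_i$ a deterministic map, the orbit $\{\bbPhi_i^{*k}(\bbs_j)\}_{k\geq 0}$ must be eventually periodic. Thus there exist $k_0\geq 0$ and a minimal period $L\geq 1$ with $\bbPhi_i^{*\,k_0+L}(\bbs_j)=\bbPhi_i^{*k_0}(\bbs_j)$. The proposition reduces to showing $L=1$, in which case $K_i=k_0$ suffices.

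To force $L=1$, I would compare, at any state within a hypothetical cycle, the continuation value of $\bbmu_i^*$ against a ``stay in place and keep observing'' alternative. Every admissible action includes an observation, so by Lemma \ref{lem:fusion} each raw fusion strictly decreases the covariance trace. Since the covariance revisits the same values along a cycle, the net projected trace change over one period is zero, so the cycle accrues no reward from the variance-reduction term and a non-positive contribution of $-\rho\sum_{i=0}^{L-1}\gamma^i\bbpsi(\bbu_i)$ from the motion penalty, strictly negative whenever the cycle involves motion. By contrast, fixing the pose $\bbp$ and iterating the raw fusion at constant $\bbQ=\bbQ(\hbbx_i,\bbp)$ gives $\bbSigma_k=(\bbSigma_0^{-1}+k\bbQ^{-1})^{-1}\to\bb0$, so the projected covariance eventually reaches the absorbing state $\bb0\in\ccalC_i$ once the raw trace drops below the tolerance $\tfrac12\min\ccalL$, after which no further motion cost is incurred. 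This stay-still alternative therefore attains non-negative discounted reward and strictly dominates any cycle carrying negative reward, contradicting the optimality of $\bbmu_i^*$ unless $L=1$.

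The main obstacle is the projection $\proj_{\ccalC_i}$: although raw Kalman fusion monotonically contracts the covariance, rounding onto the finite set $\ccalC_i$ can in principle round the trace back up, so a direct monotonicity argument on the \emph{projected} orbit does not close. The workaround is exactly the reward-based dominance argument above, which requires justifying three facts: the stay-still action lies in $\ccalU_i$ with $\bbpsi(\text{stay})=0$; iterated fusion at a fixed pose eventually projects to $\bb0$ rather than oscillating among non-zero members of $\ccalC_i$ (which follows from the raw trace tending to zero and the design of the tolerance); and the telescoping of trace over a full period correctly bounds the cycle's total reward even when single-step trace differences could in principle change sign. With these ingredients in place, $L=1$ follows and $K_i=k_0$ is the required truncation.
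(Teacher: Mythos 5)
Your opening move --- finiteness of $\ccalS_i$ plus determinism of $\bbPhi_i$ and $\bbmu_i^*$ forces the orbit to be eventually periodic, and the claim reduces to showing the period is $L=1$ by contradicting optimality --- is exactly the skeleton of the paper's proof. Where you diverge is in how a cycle of length $L>1$ is ruled out, and it is there that your argument does not close. The paper's route is a trichotomy on the traces at two distinct states $\bbs,\bbs'$ of the cycle: Lemma~\ref{lem:fusion} forbids the trace from increasing along \emph{either} arc of the cycle (from $\bbs$ to $\bbs'$ and from $\bbs'$ back to $\bbs$), so the trace must be constant around the loop; a constant-uncertainty loop then earns no variance reward and pays a strictly positive motion cost, contradicting optimality of $\bbmu_i^*$.

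Your replacement for that step --- ``the net projected trace change over one period is zero, so the cycle accrues no reward from the variance-reduction term'' --- is false as stated. The per-step reward is $(1-\rho)\left(\bbt\bbr\left[\bbSigma_{k-1}-\bbSigma_k\right]\right)^{1/2}$, and the square root does not telescope: if the projection lets the trace decrease on some steps and increase on others (the very possibility you raise as the main obstacle), the decreasing steps contribute strictly positive reward that is not cancelled by the increasing ones, whose square-rooted arguments are not even nonnegative. Hence a cycle could in principle carry strictly positive discounted variance reward per period, and your dominance comparison against the stay-still policy is not established. You list this as one of ``three facts to justify,'' but it is precisely the fact your workaround was supposed to deliver; the paper sidesteps it by applying Lemma~\ref{lem:fusion} directly to both arcs of the cycle rather than summing per-step rewards.

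Two further ingredients you invoke are not available from the problem formulation: (i) a zero-cost ``stay in place and observe'' action at every pose --- $\ccalU$ is abstract and $\bbphi$ possibly nonholonomic, and ``remain stationary'' appears only as an illustrative action in Fig.~1; and (ii) convergence of the \emph{projected} covariance to the absorbing state $\bb0$ under repeated fusion at a fixed pose, which suffers from the same rounding obstruction you identified for the monotonicity argument. The paper's shorter proof needs neither. If you want to repair your version, the cleanest fix is to adopt the paper's observation that Lemma~\ref{lem:fusion} pins the trace to a constant value on any cycle, after which only the nonpositive motion term survives and the exchange argument becomes unnecessary.
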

\begin{proof}
Since each $\ccalS_i$ is finite and the optimal policy $\bbmu^*_i : \ccalS_i \to \ccalU_i$ is fixed, failure to converge is possible only if the optimal policy drives the robot in a cycle.
This is a contradiction to the optimality of $\bbmu^*_i$.
To see why, consider an optimal local trajectory such as $\set{\dots, \bbs, \bbs',\dots,\bbs, \dots} \subset \ccalS_i.$
If $\bbt\bbr \bbSigma < \bbt\bbr \bbSigma'$, then the transition from $\bbs$ to $\bbs'$ contradicts Lemma~\ref{lem:fusion}.
Similarly, if $\bbt\bbr \bbSigma >\bbt\bbr \bbSigma'$, then the transition from $\bbs'$ to $\bbs$ is a contradiction.
If $\bbt\bbr \bbSigma= \bbt\bbr \bbSigma'$, then the robot must have moved in a loop $\set{\dots, \bbs, \bbs',\dots,\bbs, \dots}$ without changing the uncertainty, i.e., energy was consumed for no gain in reward, a contradiction to the optimality of $\bbmu^*_i.$
\end{proof}
\noindent As a result of Proposition~\ref{prop:truncate}, we do not store local optimal trajectories longer than $\max_{i \in \set{1, \dots, M}} K_i$.

We can now define the state-space for the cluster DP.
The cluster state must contain the index of the current hidden vector being estimated $i \in \set{1, \dots, M},$ the entry point $j \in \set{1, \dots, \abs{\ccalE_i}},$ and the amount of time spent observing the current state $k \in \set{0, \dots,  K_i}.$
The cluster state must also contain the visitation history a $\bbv \in \set{0,1}^M$ to prevent rewards from being gained by collecting the same information twice.
The cluster state-space is thus
\begin{align}
\ccalCS \triangleq& \set{0,1}^M \times \set{1, \dots ,M} \times \label{eq:MTstates} \\
&\hspace{0.2cm} \set{1, \dots , \max\nolimits_i  \abs{\ccalE_i}}  \times \set{0, \dots, \max\nolimits_i K_i}, \nonumber 
\end{align}
where the $\max$ functions are needed to account for the largest local state spaces in the cluster.

Let $\ccalCU = \set{1, \dots, M}$ denote the set of control inputs in the cluster DP.
Let also $\PhiC \colon \ccalCS \times \ccalCU \to \ccalCS$ denote the cluster transition function.
When the robot is in state $(\bbv, i,j,k) \in \ccalCS,$ then $\bbu_i \in \ccalCU$ transitions the robot to cluster state $(\bbv, i,j,k+1) \in \ccalCS.$
Action $\bbu_i$ has the same reward with the corresponding transition in the local DP.
The remaining actions $\set{\bbu_\ell \mid \ell \neq i}$ set the visitation history of the $i$-th hidden vector to zero and transition the robot to the $\ell$-th local space, specifically by selecting the closest entry point in in $\ccalE_\ell$ to the robot's current location.
In particular, 
\[
\PhiC \left( (\bbv, i,j,k), \bbu_\ell \right) =
\begin{cases}
(\bbv, i,j,k+1) &\text{if } \ell=i \\
(\bbv', \ell ,j',0) &\text{if } \ell \neq i
\end{cases},
\]
where 
$\bbv_i'=0$ and 
\begin{equation}\label{eq:nextentry}
j' = 
\argmin\nolimits_{j'' } \norm{ \bbs_{j''} - \Phi_i^{*k} (\bbs_j) } \mid \bbs_{j''}\in \ccalE_\ell,\bbs_j \in \ccalE_i
.
\end{equation}

The reward for continuing the local optimal policy is the same as the reward in the local DP, and the reward for transitioning to a new hidden state in the cluster is the negative distance to the next entry point $j'$ defined in \eqref{eq:nextentry}. 
The only other difference with the local reward is that the robot cannot gain positive reward for taking action $\bbu_i$ when at any state $(\bbv, i, \cdot, \cdot)$ such that $\bbv_i=0.$
In particular, define the reward function in the cluster DP $R_C \colon \ccalCS \times \ccalCU \to \reals$ as
\agn*{
R_C &\left( (\bbv, i,j,k), \bbu_\ell \right) =
\\
&\begin{cases}
R \left( \bbPhi^{*k}_i \left(\bbs_j \right) ,\bbmu_i^* \left(\bbPhi^{*k}_i \left(\bbs_j \right) \right) \right) &\text{if } \bbv_i = 1, \ell=i \\
-\rho \bbpsi(\bbu_\ell) &\text{else}
\end{cases}.
}

{
\begin{rem}\label{r:tsp}
One could directly apply the method developed in Section~\ref{sec:st} to the task of sensing multiple targets at once.
The state-space in this case would be of the form
$
\pair{ \cup_{i =1}^M \ccalW_i} \times \ccalC_1 \times \cdots \times \ccalC_M,
$
which has 
\begin{equation}\label{eq:sizebrute}
\left(\sum_{i=1}^M N_{\ccalW_i}\right) \prod_{i=1}^M \underbrace{\left( 1 + N_{\ccalA_i } \sum_{\lambda \in \ccalL_i} N_{\ccalT_\lambda} 
\right)}_{N_{\ccalC_i}}
\end{equation}
states, an intractably large number of states for any sufficiently rich belief space $\ccalC_i.$
This is the reason why existing approaches have typically employ online policy search or point-based solvers.
Our proposed hierarchical approach solves a simple local DP once to find a local optimal policy for every state, mitigating the exponential complexity to the cluster DP, where the state space has size $O(2^M)$.
Solving the cluster DP has similar in complexity to TSP solvers although it solves a substantially more complicated problem.
\end{rem}
}

\section{Optimal Planning and Resource Allocation for Multiple Robots}\label{sec:many}Given the single-robot optimal control policies developed in Sections \ref{sec:st} and \ref{sec:mt}, in this section we develop a distributed framework to synthesize them in a multi-robot system that can efficiently estimate large groups of hidden vectors. 
%The key idea is to coarsely divide the $M$ hidden states, where $M$ now refers to the total amount of hidden states not just the amount in a single cluster, into smaller clusters based on prior beliefs of their geographic locations, solve a cluster DP for each, and develop a distributed auction mechanism \cite{zavlanos2008distributed} to dynamically and sequentially allocate those clusters to the individual robots.
%% as they move to collaboratively estimate all $M$ hidden states. 
%In our proposed method, auctions take place online locally, among robots that are within communication range of each other, that place bids based on the distance they must travel to begin observing new clusters. 
%The integrated system produces solutions that are optimal at the cluster level, but also scale well for large collections of hidden states and can be implemented in parallel by multiple robots.
%
%
To develop the proposed framework, let $\ccalT=\{1,\dots,M\}$ denote the index set of all available hidden states, and for every hidden state $t\in\ccalT$ define the discrete set $\ccalZ^t \triangleq \big\{ \bbz\in\reals^d \mid (\bbz,\theta)\in\ccalW_t \big\}$ containing those states of the local workspace $\ccalW_t$ that correspond to robot positions in $d=2$ or $3$ dimensions only (excluding other configuration information contained in $\theta$). 
Let $\ccalZ=\cup_{t\in\ccalT}\ccalZ^t$ denote the set of all possible robot positions. 
Consider further a partition $\set{ \ccalT_p}_{p=1}^P$ of the hidden state set $\ccalT$ into $P\leq M$ clusters so that two hidden states belong in the same cluster if they are sufficiently close to each other based on the initial belief of their locations. 
Let $\ccalZ_p \triangleq \big\{ \bbz\in\ccalZ^t \mid t\in\ccalT_p \big\}$ denote the set of all robot positions in cluster $\ccalT_p$. Using the methods developed in Sections~\ref{sec:st} and \ref{sec:mt}, we can determine an optimal sensing policy for every cluster of hidden states $\ccalT_p$, that is a function from the state-space of robot positions and hidden state uncertainties to the set of robot actions. 
Then, given an entry point from where a robot can begin sensing cluster $\ccalT_p$ this policy can be combined with the system dynamics \eqref{eq:robot_dynamics} and \eqref{eq:covariance_dynamics} to generate an optimal robot trajectory within that cluster. 
In particular, we define the set of \emph{entry points} of cluster $\ccalT_p$ to be the set of states in $\ccalZ_p$ lying on the boundary of its convex hull,  denoted by $\partial\ccalZ_p$. 
We also define an optimal trajectory in cluster $\ccalT_p$ starting from a point $\bbp_{j_p}\in\partial\ccalZ_p$ by $\bbxi_{p,j_p}:[0,L_{p,j_p}]\to\ccalZ_p$, where $L_{p,j_p}>0$ denotes the total length of that trajectory from the entry point until the cluster is completed and $j_p$ is an index of the $j$-th entry point in $\partial\ccalZ_p$.

Consider now $N$ mobile robots and let $\set{ \bbz_i^k}_{i=1}^N \subset \reals^d$ denote their locations at time $k\in \mbN$.
%We assume that the robots are initially co-located at a depot located at $\bbz_{\text{depot}}\in\reals^d$.
While sensing cluster $\ccalT_p$ a robot moves along the optimal trajectory $\bbxi_{p,j_p}$.
When the exploration of $\ccalT_p$ has been completed, the robot needs to transition to a different cluster.
For this, it needs to identify an entry point in the new cluster and travel to that point.
Therefore, the set of all possible paths that a robot can follow while sensing different clusters of hidden states can be represented by graph $\ccalG=(\ccalV,\ccalE)$, where $\ccalV=\big(\cup_{p=1}^P\partial\ccalZ_p\big)$ denotes the set of vertices (embedded in $\reals^d$) containing the depot and all entry points of the $P$ hidden state clusters and $\ccalE\subset \ccalV\times\ccalV$ is the set of directed edges so that 
%(i) the edge $(\bbz_{\text{depot}},\bbp_{j_q})\in\ccalE$ if $\bbp_{j_q}=\textrm{argmin}_{\bbp\in\partial \ccalZ_q} \|\bbz_{\text{depot}}-\bbp\|$, and (ii) 
for any entry points $\bbp_{j_p}\in\partial\ccalZ_p$ and $\bbp_{j_q}\in\partial\ccalZ_q$, the edge $(\bbp_{j_p},\bbp_{j_q})\in\ccalE$ if $p\neq q$ and $\bbp_{j_q}=\textrm{argmin}_{\bbp\in\partial \ccalZ_q} \|\bbxi_{p,j_p}(L_{p,j_p})-\bbp\|$.
In other words, the directed edges in $\ccalE$ connect
% the depot to the closest entry points of the clusters and 
every entry point in $\ccalV$ to the closest entry points of different clusters.
With every edge $(\bbp_{j_p},\bbp_{j_q})\in\ccalE$ in $\ccalG$ we associate a distance that the robot needs to travel in order to get to entry point $\bbp_{j_q}\in\partial\ccalZ_q$ having started form point $\bbp_{j_p}\in\partial\ccalZ_p$.
This distance consists of the distance $L_{p,j_p}$ that the robot needs to travel within cluster $\ccalT_p$ plus the distance $\|\bbxi_{p,j_p}(L_{p,j_p})-\bbp_{j_q}\|$ that the robot needs to travel to reach entry point $\bbp_{j_q}\in\partial\ccalZ_q$ once it has completed cluster $\ccalT_p$, i.e.,
$
d(\bbp_{j_p},\bbp_{j_q}) = L_{p,j_p} + \|\bbxi_{p,j_p}(L_{p,j_p})-\bbp_{j_q}\|.
$
We assume that between clusters, robots travel on straight-line paths.
An illustration of the graph $\ccalG$ of possible motion paths for the running example of sparse landmark localization in two dimensions, containing the cluster entry points and trajectories to two possible next clusters, is shown in Figure~\ref{fig:clusters}.
{To avoid having two robots select the same cluster and fail to resolve this conflict, we assume that the communication range of the robots is larger than the largest diameter of any cluster.}

\begin{figure}
  \centering
    \includegraphics[width=8cm]{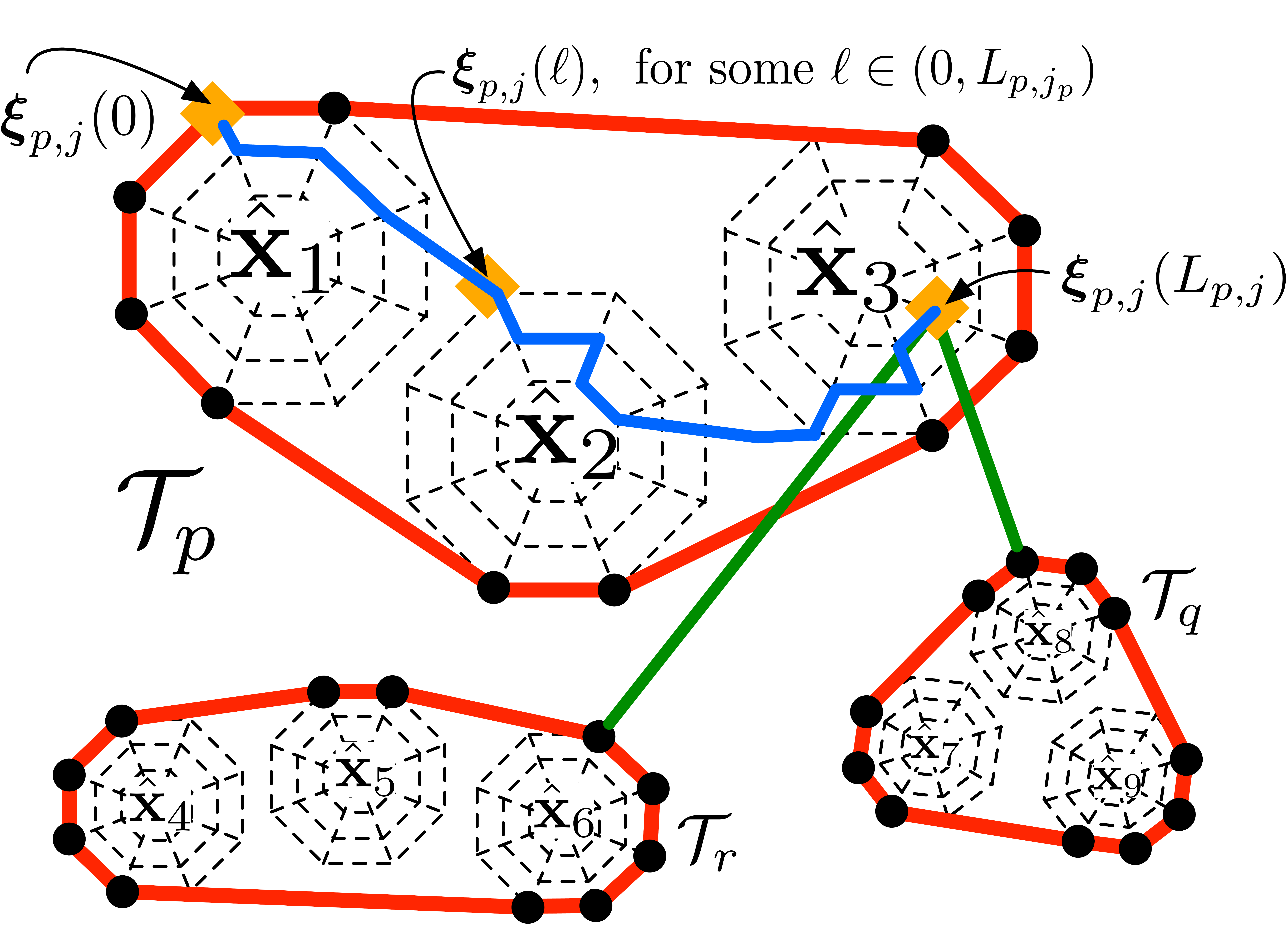}
  \caption{
An illustration of two potential sensing paths for a single robot for $d=2$ in a landmark localization task.
The blue segment is the range of $\bbxi_{p,j}$, where the top left entry point is arbitrarily assigned the index $j$.
The robot's progress at three instants, corresponding to $\ell=0$, $\ell=L_{p,j}$, and some intermediate value $\ell \in (0,L_{p,j})$, are shown as orange diamonds.
The choices of routes to two next clusters $\ccalT_q$ and $\ccalT_r$ are shown as green segments emanating from $\bbxi_{p,j}(L_{p,j})$.
}\label{fig:clusters}
\vspace{-0.75cm}
\end{figure}

In what follows, we develop a distributed framework to allow the team of $N$ robots to dynamically allocate the $P$ clusters amongst themselves, as they plan trajectories on the graph $\ccalG$ to visit their assigned hidden state clusters.
Specifically, we assume that at every time $k$, every robot $i$ can be in one of three modes $m_i^k\in\{\textrm{`busy', `transit', `done'}\}$. 
We say that a robot is `busy' if it is currently sensing a cluster, it is in `transit' if it is traveling between clusters, and it is `done' if it has completed sensing its last cluster. 
While in mode `busy' robot $i$ moves along the trajectory $\bbxi_{p,j_p}$ according to
\begin{align}\label{eq:busynav}
\bbz_i^{k+1}=\busy (\ell^k, p, j_p) \triangleq \bbxi_{p,j_p}(\ell^k+\delta\ell),
\end{align}
where $\ell^k=\bbxi_{p,j_p}^{-1}(\bbz_i^k)\in[0,L_{p,j_p}]$ denotes the distance that robot $i$ has already travelled in cluster $\ccalT_p$ and $\delta\ell>0$ is a small, user-defined, positive distance increment that the robot travels between times $k$ and $k+1$. While in mode `transit', robot $i$ moves according to
\begin{align}\label{eq:transitnav}
\bbz_i^{k+1}=\transit (\bbz_i^k, q) \triangleq \bbz_i^k + \delta\ell \frac{  \bbp_{j_q} - \bbz_i^k }{ \norm{ \bbp_{j_q} - \bbz_i^k }},
\end{align}
where $q$ denotes the index of the cluster that the robot is traveling to, $\bbp_{j_q}\in\partial\ccalZ_q$ is the selected closest entry point in that cluster defined as
$
\bbp_{j_q}=\textrm{argmin}_{\bbp\in\partial \ccalZ_q} \|\bbz_i^k-\bbp\|,
$
and $\delta\ell>0$ is defined as in \eqref{eq:busynav}. Assuming that robot $i$ is in transit to cluster $\ccalT_q$ after having completed cluster $\ccalT_p$, the line segment defined by the end points $\bbz_i^k$ and $\bbp_{j_q}$ is completely contained in the line segment defined by the points $\bbxi_{p,j_p}(L_{p,j_p})$ and $\bbp_{j_q}$. 
Therefore, \eqref{eq:transitnav} drives the robot along the path corresponding to the edge $(\bbp_{j_p},\bbp_{j_q})\in\ccalE$. 
However, the controller \eqref{eq:transitnav} also allows robot $i$ to diverge from the predefined paths in the graph $\ccalG$ by selecting an alternative closest cluster while in transit mode, for reasons that we discuss in Section \ref{sec:auctions}, e.g., if another robot places a higher bid. 
In this case, the motion of robot $i$ temporarily leaves the predefined motion paths in $\ccalG$ and it re-enters $\ccalG$ once it has reached cluster $\ccalT_q$.
%The process that controls the navigation of robot $i$ is outlined in Algorithm~\ref{alg:navigate}. 
%In Algorithm~\ref{alg:navigate}, and i
In the remainder of this section, we will refer to the current cluster being sensed by the $i$-th robot as $c_{i,\text{curr}}$ and the next cluster to be sensed by that robot as $c_{i,\text{next}}$. 
Then, our goal is to find a set of $N$ distinct paths in $\ccalG$ whose union visits every cluster exactly once and has a minimum combined travel cost.  
We achieve this goal by a distributed auction mechanism that we discuss next.

\subsection{Distributed Auction Mechanism}\label{sec:auctions}

%Determining a set of $N$ disjoint paths through $\ccalG$ that visit every cluster exactly once and have a minimum combined travel cost can be posed by combining the formulations of the multiple Traveling Salesmen Problem (mTSP) \cite{iqbal86} and the generalized Traveling Salesman Problem (GTSP) \cite{behzad02} for asymmetric networks.
%Although approximate solutions to these problems exist for the symmetric case \cite{malik07}, the extension to the asymmetric case is an open problem. 
%Additionally, these algorithms need to run offline on a central processor that computes and communicates the optimal paths to every robot.
%Instead, i

In this section we propose a distributed auction method to dynamically and sequentially allocate clusters to robots as they move to localize the whole scene.
{
A proof of the convergence of distributed auctions can be found in \cite{zavlanos2008distributed}.
}

%Our method lacks the optimality guarantees of centralized and offline TSP approaches, but it scales well to large numbers of hidden states and can be implemented online and in parallel among multiple robots.
%
%
Specifically, let $s=1,2,\dots$ denote a sequence of time instants when the robots communicate with each other, that is in general different from the times $k$ when the robots move, and let $\ccalN_i^s \triangleq \big\{j \mid \|\bbz_i^s-\bbz_j^s\|<\Delta\big\}$ denote the set of neighbors of robot $i$ at time $s$, where $\Delta>0$ denotes a given communication range. Moreover, assume that every robot $i$ carries two lists: the list of `free' clusters $\ccalI_{i,f}^s$ and the list of `taken' clusters $\ccalI_{i,t}^s$, so that $\ccalI_{i,f}^s \cup \ccalI_{i,t}^s = \set{1, \dots, P}$ and $\ccalI_{i,f}^s \cap \ccalI_{i,t}^s = \emptyset$ for all time $s$.
Initially, $\ccalI_{i,f}^0 = \set{1, \dots, P}$ and $\ccalI_{i,t}^0 = \emptyset$. The list of `free' clusters contains clusters that are available to robot $i$, meaning that robot $i$ can select from those clusters a cluster to visit next. On the other hand, the list of `taken' clusters contains clusters that have been selected by other robots and are, therefore, not available to robot $i$. During operation, robot $i$ coordinates with its neighbors $j \in \ccalN_i^s$ to update its list of `taken' and `free' clusters by $\ccalI_{i,t}^{s+1}=\cup_{j \in \ccalN_i^s } \ccalI_{j,t}^s$ and $\ccalI_{i,f}^{s+1}=\{1,\dots,P\}\setminus \ccalI_{i,t}^{s+1}$, respectively. In other words, with every communication round, robot $i$ removes {\blue from} its list of free clusters those clusters that are considered taken by other robots.

Given the list of `free' clusters $\ccalI_{i,f}^s$ at time $s$, robot $i$ can select any cluster from that list to be the next cluster $c_{i,\text{next}}$ to visit. To minimize the total distance travelled by the robots, we propose a greedy approach where robots select a cluster that is the closest to their current location. In particular, we define
\begin{align}\label{eq:cnext}
&c_{i,\text{next}}^{s+1} =   \! \begin{cases}
\argmin_{c \in \ccalI_{i,f}^s}  d_{p,j_p}^\text{busy}(\ell^s, c) &\text{if }m_i^s = \text{`busy'} \\
 \argmin_{c \in \ccalI_{i,f}^s} d^\text{trans} (\bbz_i^s,c)  &\text{if }m_i^s = \text{`transit'}
\end{cases}
\\
&
\label{eq:busy}
d_{p,j_p}^{\text{busy}}(\ell^s, c) \triangleq (L_{p,j_p} \! - \! \ell^s) +  \! \!    \min_{\bbp\in\partial\ccalZ_c} \|\bbxi_{p,j_p}(L_{p,j_p}) \! - \! \bbp\|\\
&\text{and }
\label{eq:trans}
d^{\text{trans}}(\bbz_i^s, c) \triangleq \min_{\bbp\in\partial\ccalZ_c} \|\bbz_i^s-\bbp\|,
\end{align} 
In \eqref{eq:busy} and \eqref{eq:trans}, $d_{p,j_p}^{\text{busy}}(\ell^s, c)$ and $d^{\text{trans}}(\bbz_i^s, c)$ are the distances that robot $i$ needs to travel in order to reach a new cluster $c$ from its current location $\bbz_i^s$ while in modes `busy' and `transit', respectively, and $\ell^s=\bbxi_{p,j_p}^{-1}(\bbz_i^s)\in[0,L_{p,j_p}]$, as in \eqref{eq:busynav}. When robot $i$ selects a new cluster $c_{i,\text{next}}^{s}$, then it also updates its lists of `free' and `taken' clusters by removing $c_{i,\text{next}}^{s}$ form $\ccalI_{i,f}^s$ and adding it to $\ccalI_{i,t}^s$.

Every time $c_{i,\text{next}}$ is updated, robot $i$ also places a bid that indicates how important the selection of the new cluster is. The bids are inversely proportional to the distance  robot $i$ needs to travel to reach the new cluster, so that nearby clusters have higher value. Specifically, bids are placed according to
\begin{align}\label{eq:bids}
b_i^{s+1} &= \begin{cases}
\max_{c \in \ccalI_{i,f}^s}  \left(1+d_{p,j_p}^\text{busy}(\ell^s, c) \right)^{-1} &\text{if }m_i^s = \text{`busy'}, \\
\max_{c \in \ccalI_{i,f}^s} \Big(1+d^\text{trans} (\bbz_i^s,c) \Big)^{-1} &\text{if }m_i^s = \text{`transit'}
\end{cases}.
\end{align}
If at some point in time there exist neighbors $j\in\ccalN_i^s$ of robot $i$ so that $c_{j,\text{next}}^s = c_{i,\text{next}}^s$, then these robots set up a local auction and compare their bids to resolve the underlying conflict. If $b_i^s>b_j^s$ for all $j\in\ccalN_i^s$ for which $c_{j,\text{next}}^s = c_{i,\text{next}}^s$, then robot $i$ wins the auction and maintains the same next cluster and bid, i.e., $c_{i,\text{next}}^{s+1} = c_{i,\text{next}}^s$ and $b_i^{s+1}=b_i^s$. The robots that lose the auction update their set of `free' clusters by removing cluster $c_{j,\text{next}}^s$, i.e., $\ccalI_{j,f}^s=\ccalI_{j,f}^s\setminus \{c_{j,\text{next}}^s\}$, and select a new next cluster and bid according to \eqref{eq:cnext} and \eqref{eq:bids}. If $\ccalI_{i,f}^s=\emptyset$, i.e., if there are no other available clusters for robot $i$, we set $c_{i,\text{next}}^{s+1} = \textrm{`depot'}$, effectively controlling the robot to return to a depot after it has completed its current (final) task. 
%The above distributed coordination mechanism between robots is summarized in Algorithm~\ref{alg:coordinate}.
%
%\begin{algorithm}[t]
%\caption{Navigation Process for Robot $i$}
%\label{alg:navigate}
%\begin{algorithmic}
%\REQUIRE Initialize mode to `transit' and bid on nearest cluster according to $d^{{\rm trans}}$
%\IF{`transit' mode}
%    \STATE Move to nearest entry point of the assigned cluster
%    \IF{arrived at new cluster}
%        \STATE Set mode to `busy' and place new auction bid
%    \ENDIF
%\ELSIF{`busy' mode}
%    \STATE Continue along the current cluster's optimal policy,
%        \IF{finished the current cluster optimal policy}
%            \STATE Set mode to `transit' or `done' depending on $\ccalI_{i,f}^s$
%        \ENDIF
%\ENDIF 
%\end{algorithmic}
%\end{algorithm}
%
%
%\begin{algorithm}[t]
%\caption{Coordination Process for Robot $i$}
%\label{alg:coordinate}
%\begin{algorithmic}
%\REQUIRE Initialize lists of taken clusters free clusters
%\WHILE{outbid}
%    \STATE Update list of free and taken clusters
%    \STATE Place a bid on a new hidden state from the free list
%        \IF{outbid for all free cluster}
%            \STATE Return to depot
%        \ELSE 
%            \STATE Update next cluster destination
%        \ENDIF
%\ENDWHILE
%\end{algorithmic}
%\end{algorithm}

\begin{figure*}[t]
\begin{center}
\includegraphics[width=11cm]{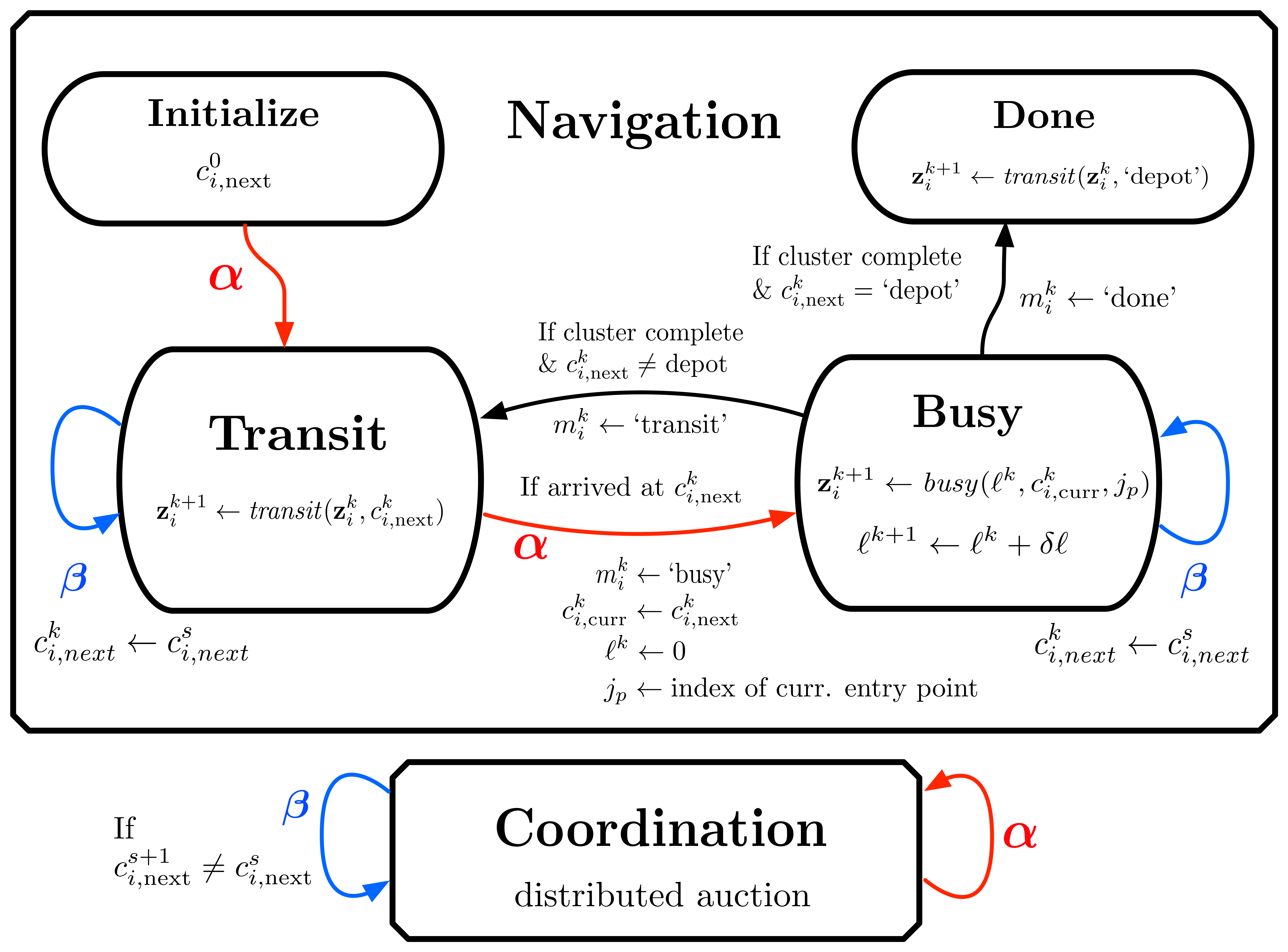}
\caption{
A diagram representing the controller for the $i$-th robot.
The two blocks, Navigation and Coordination, synchronize whenever the updates denoted by $\alpha$ and $\beta$ are triggered.
Functions \emph{transit} and \emph{busy} are defined in \eqref{eq:transitnav} and \eqref{eq:busynav}.
%The full process in this diagram corresponds to the parallel execution of Algorithms~\ref{alg:navigate}~and~\ref{alg:coordinate}.
}
\label{fig:automaton}
\end{center}
\vspace{-0.5cm}
\end{figure*}

Fig~\ref{fig:automaton} illustrates the integrated, hybrid, controller.
% given by parallel implementation of Algorithms~\ref{alg:navigate}~and~\ref{alg:coordinate}. 
The labels $\alpha$ and $\beta$ mark events that need to be synchronized across the navigation and coordination control blocks.
In particular, transitions labeled by the letter $\alpha$ are triggered by the navigation block and generate synchronous transitions in the coordination block aimed to produce new bids or update the next cluster that the robot needs to visit. 
Similarly, transitions labeled by the letter $\beta$ are triggered by the coordination block when new bids are computed or the next cluster that the robot needs to visit is updated and they generate synchronous transitions in the navigation block aimed to guide the robot its new assigned cluster. 
Note that while the $k$ and $s$ time indices used for the navigation and coordination blocks, respectively, can in general be different, the transitions labeled by the letters $\alpha$ and $\beta$ can generate transitions in these blocks that can be off-clock. 
For example, a transition at time $k$ in the navigation block labeled by $\alpha$ will generate a transition in the coordination block at a time instant $k\neq s$.

\section{Numerical Simulations}\label{sec:sim}

In this section, we present simulations of the proposed distributed state estimation algorithm.
In our simulations, we focus on the problem of sparse landmark localization.  
As a sensor model, we use a stereo camera.
We refer the reader to \cite{freundlich13icra,freundlich13cdc} for a discussion of the covariance function $\bbQ(\hbbx_i,\bbp)$ for stereo vision.
% to Section~\ref{sec:exp}; see also \cite{freundlich13icra,freundlich13cdc}.
We assume that each camera in the simulated rig has $1024 \times 1024$ resolution and a 70$^\circ$ field of view.
The characteristic length in stereo vision is the baseline.
Therefore, in these simulations, \emph{all units are measured in stereo baselines unless otherwise stated.}
For a mobile stereo rig, the baseline could be as large as 1 meter or as small as a few centimeters.

To simplify the exposition, we consider fully actuated kinematic robots.
The local pose spaces and prior error covariances are identical for each landmark, i.e., $\ccalW_i = \ccalW_j$ and $\bbSigma_{i,0} = \bbSigma_{j,0}=\beta \bbI_3$ for all $i,j \in \set{1, \dots,M}$, where we recall that the largest eigenvalue in the dimension of the state-space that represents the possible principal eigenvalues is denoted as $\beta.$

\subsection{Active sensing of a single target}\label{sec:st_results}

The local pose space $\ccalW$ for the single target case is made up of concentric spherical shells with randomly distributed viewpoints on each.
There were 177 total views in $\ccalW$ at six equally spaced radii between 20 and 40 units from the landmark.
To discretize the covariance space $\ccalC$, we set $1 = \max \set{\lambda \in \ccalL} \triangleq \beta $.
In total, the number of possible principal eigenvalues was $N_\ccalL = 6$.
We also use $N_\ccalA = 3$ condition numbers. 
We set the number of principal eigenvectors for covariances with $\beta$ as the principal eigenvalue to $N_{\ccalT_\lambda} = 98.$
This means $\ccalT_\beta$ has 98 unit vectors.
The number of samples at other principal eigenvectors $\set{\lambda \in \ccalL \mid \lambda \neq \beta}$ was $
\ceil{
\frac{\lambda}{\beta}
N_{\ccalT_\beta} 
}.$
For the logspace discretizations, we set $\kappa_\ccalL=9,$ and $\kappa_\ccalA=3.$
Stereo vision is used as the sensing model; see Appendix~\ref{app:noise}.
The discount factor $\gamma$ was 0.99.
The value of the uncertainty reduction gain $\rho$ is very important, since it controls the tradeoff between traveling cost and uncertainty reduction due to more images.
Fig.~\ref{fig:error} shows the sensitivity of the total reward gained and the two opposing objectives that comprise it as they depend on $\rho$.
Fig.~\ref{fig:rho999} shows example an optimal trajectory in both pose and covariance space with $\rho=0.999.$

\begin{figure}[t]
\centering
\includegraphics[width=8cm]{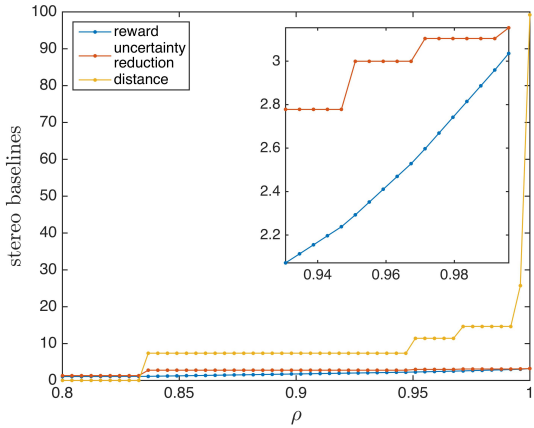} 
\caption{Showing the effect of the uncertainty gain parameter $\rho$ on the total reward, uncertainty reduction, and total distance traveled by the sensor in the single landmark localization simulations.
Lines are drawn to guide the eye
}
\label{fig:error}
\vspace{-0.0cm}
\end{figure}

\begin{figure}[t]
\centering
\includegraphics[width=2in]{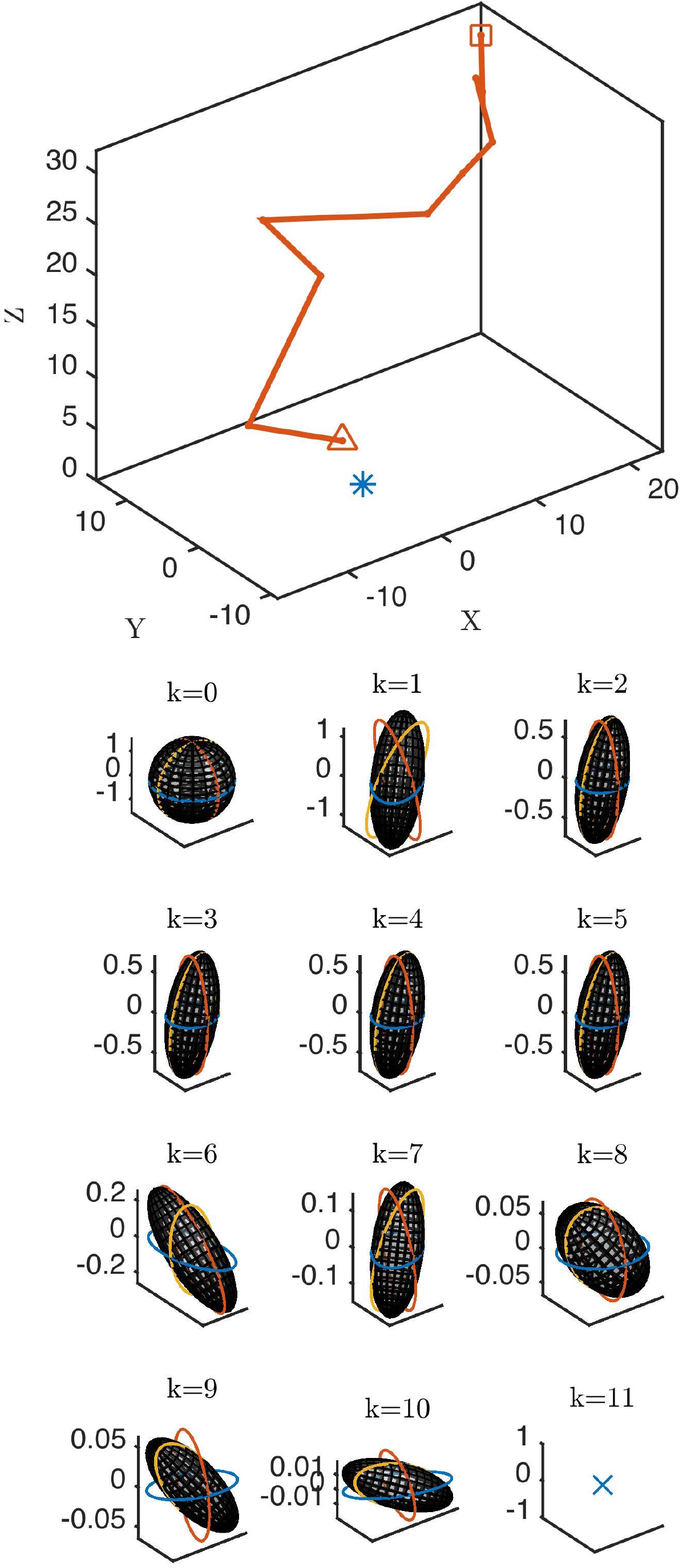} 
\caption{Showing an optimal trajectory through both pose (top panel) and covariance (bottom six panels) space for $\rho = 0.999.$
The indices in the top panel correspond to the time indices in the bottom panel.
Covariance states are shown as 50\% confidence regions.
Robot starts at $\square$ and ends at $\triangle.$
All units in stereo baselines and axis directions equally scaled
}
\label{fig:rho999}
\vspace{-0.0cm}
\end{figure}

In these simulations, we approximate error distributions as state-dependent white noise using the equations given in Appendix~\ref{app:noise}.
White noise is uncorrelated in time, thus, under standard KF assumptions, two observations from the same vantage point will reduce variance by Lemma~\ref{lem:fusion}.
This is why, in the figures, there are more intermediate covariance ellipsoids than steps in the robot trajectories; it is sometimes optimal to remain stationary and take multiple observations.
In real scenarios, noise is seldom well-approximated as white, and duplicate observations will exacerbate bias.
We account for such hidden biases in these simulations by using a real model of stereo vision that is subject to the nonlinear effects of quantization on the image plane; see Appendix~\ref{app:noise}.
Interestingly, the presence of such noise reveals a secondary benefit of sensor mobility:
mobile sensors avoid by default duplicate observations that would plague a static sensor.
Although our controller does not explicitly encourage motion for this reason, a basic heuristic adjustment that does would be simple to implement by, e.g., removing the action ``remain stationary.''

\begin{table}
\centering
\caption{Comparison to Heuristics for Single Landmark}
\label{tab:heur}
\begin{tabular}{| c | c c c c c |}
\hline 
Method       & optimal  & closer & static & circle & random \\ \hline
Reward (m)& 3.15    & 2.75	& 1.31   & 1.66   & 2.74 \\ \hline
\end{tabular}
\end{table}

Table~\ref{tab:heur} compares the total reward gained by a robot starting from the same initial condition under the optimal policy with $\rho=0.999$ found by our (optimal) local controller with some heuristic policies.
The heuristics we chose for comparison were a policy that drives closer to the hidden state (closer), one that circles around the source (circle), one that remains still (static), and one that acts randomly (random).
The fact that that `random' performs about as well as `closer' suggests that a sophisticated controller is needed in order to beat `random.'
Our `optimal' policy obtains 115\% of the reward of the `closer' and `random' policies.
Also note that the `static' and `circle' policies perform very poorly in comparison to the others.
This is because they stay at a constant depth with respect to the landmark.
In triangulation with stereo vision, there is significant bias in the viewing direction, causing poor localization performance given multiple observations at constant depth.

\subsection{Active sensing of Target Clusters}\label{sec:mt_results}

\begin{figure}
\centering
\begin{tabular}{c}
\includegraphics[width=7cm]{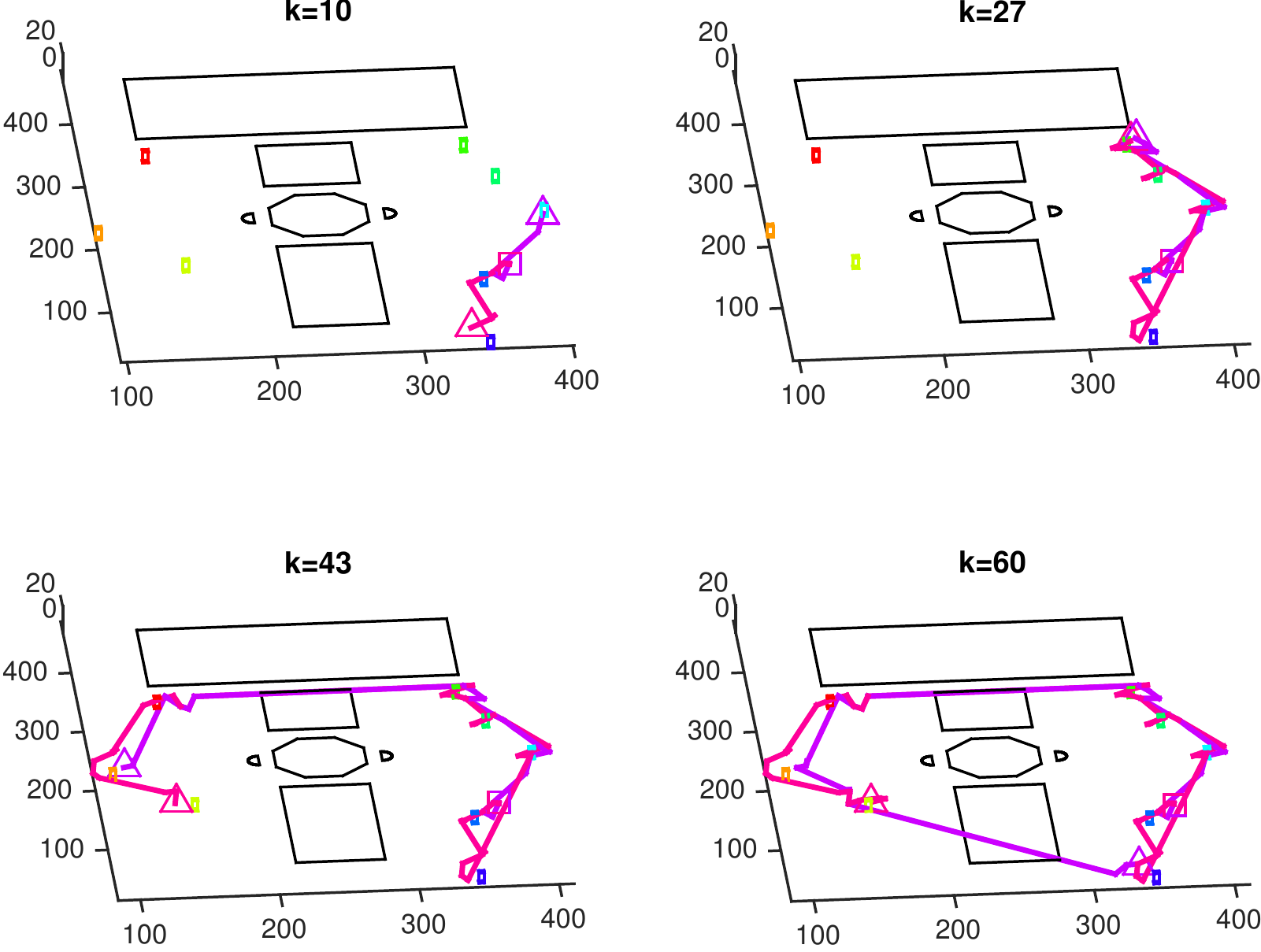}
\\
\includegraphics[width=7cm]{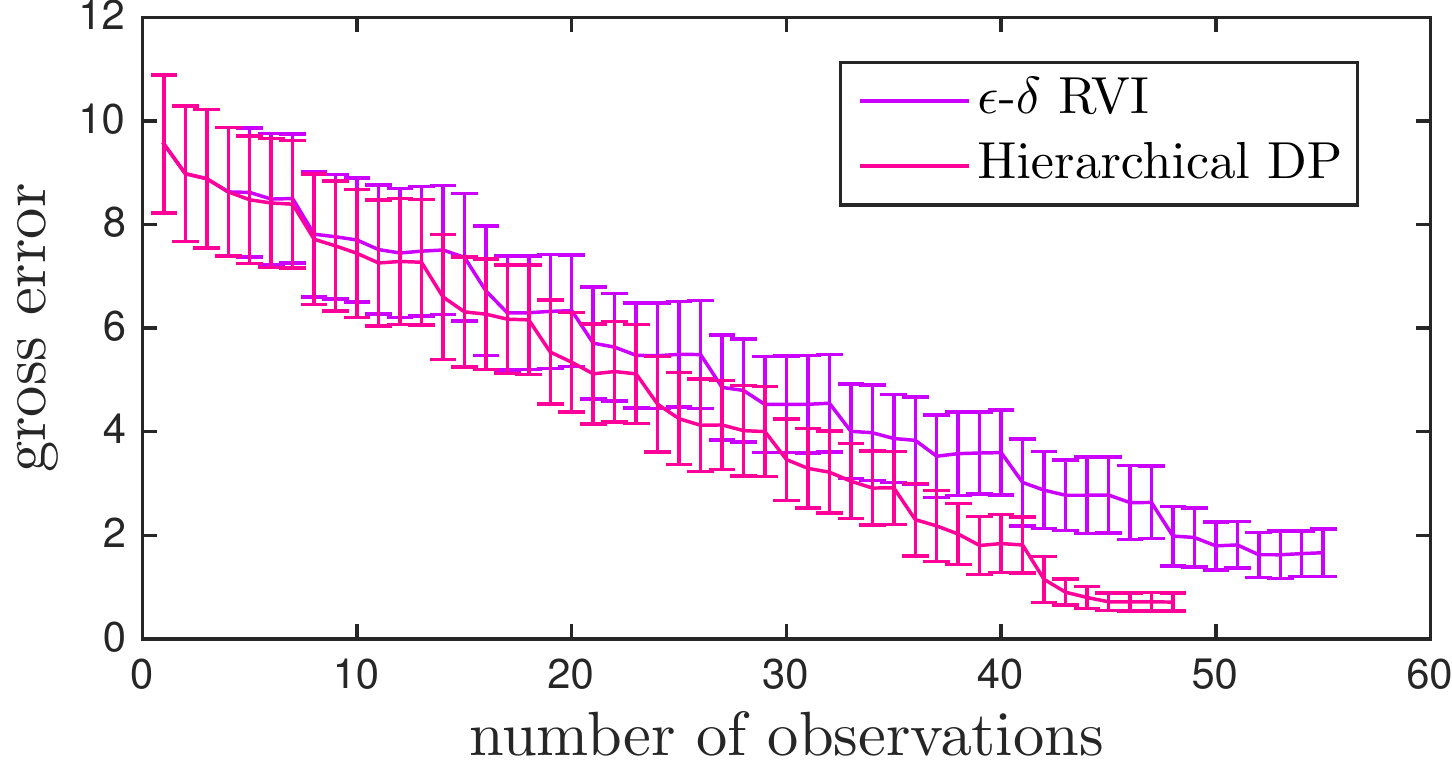}
\end{tabular}
\caption{
Top panel:
Showing trajectories generated by our hierarchical controller and by the method in \cite{atanasov14}.
The trajectories start at the $\square$ and end at the $\triangle$.
The number {\bf k} indicates the number of observations taken.
Bottom Panel: plotting the sum of errors in all landmarks versus the number of observations required found over 100 simulations of observations taken along the trajectories in the top panel.
The midlines represent the empirical mean and error bars represent the standard deviation of the sum of error vectors over the 100 simulations.
All units in meters
}
\label{fig:cluster}
\vspace{-0.0cm}
\end{figure}

We present simulations of the proposed distributed estimation method for a single robot observing a cluster of sparse landmarks.
For the landmarks, we model a famous group of sculptures called the Queens of France and Famous Women, which can be found in the Luxembourg Garden, Paris, France.
The Luxembourg Garden is actually home to hundreds of sculptures, and the Queens of France and Famous Women is a cluster that surrounds a large pool (octagon in Fig.~\ref{fig:cluster}) that is adjacent to the Luxembourg Palace (large rectangle in Fig.~\ref{fig:cluster}).
We chose a subset of eight queens as individual sparse landmarks to comprise a cluster of statues.
The task of the robot in this scenario is to exactly localize each statue to create a precise spatial map of the sculpture garden.
For these simulations, we used a discount factor and gain of 0.999.
We use the same hemispherical pose state-spaces as in Section~\ref{sec:st_results} for each local DP.

We also compare the trajectories generated by our cluster DP with a state-of-the-art algorithm \cite{atanasov14}.
The method in \cite{atanasov14}, called $\epsilon$-$\delta$ Reduced Value Iteration ($\edRVI$), grows a tree in belief space that computes an optimal sensor trajectory, i.e., it applies the most widely used approach to the same problem that we solve under similar assumptions.
%Essentially, given an initial robot pose and covariance of the hidden state vectors, the robot builds a tree by testing all possible actions at each planing epoch.
%To address the curse of history, $\edRVI$ can aggressively prune the tree be eliminating algebraically redundant covariance matrices and similar trajectories, similar to the pruning strategy in POMDP approaches, e.g. \cite{agha2014firm,agha15}.
%In these simulations of $\edRVI$, we used a 12 step lookahead, $\epsilon=0$ and $\delta = \frac{1}{2} \min \set{\lambda \in \ccalL}.$
In our implementation of $\edRVI$, planning horizons greater than 12 caused our simulation, run on Macbook Air\texttrademark with 4 GB of RAM, to run out of memory.
Because of the sparsity of the scene, 12 step lookahead was sometimes not enough to plan future landmarks to visit.
Therefore, if the robot following $\edRVI$ finishes observing a landmark to the threshold, it greedily selects a new landmark.
%We also compared $\edRVI$ with our algorithm for a single landmark, i.e., the local DP, and we observed nearly identical performance between both methods.
%
The top panel in Fig.~\ref{fig:cluster} displays four snapshots of two different trajectories: one produced by the cluster DP optimal policy and one produced using $\edRVI$.
%We found firstly that our approach travels between local pose spaces efficiently because it essentially solves a sophisticated TSP, whereas our implementation of $\edRVI$ acts greedily when selecting new landmarks.
%Quantitatively, the robot following our hierarchical controller (pink line) travelled 960 meters, compared to the robot that plans its trajectory using $\edRVI$ (magenta line) travels 1177 meters.
%
The bottom panel of Fig.~\ref{fig:cluster} shows the result of the KF output of 100 simulations of observations taken along the trajectories in the top panel.
The empirical standard deviations for each error vector are also drawn on the figure.
%Not only do the observations taken along the trajectory generated by our algorithm lead to lower gross error on the average than those taken along the $\edRVI$ trajectory, but also the standard deviation is lower and fewer total observations are needed.
%This result is especially intriguing because our algorithm considers a penalty for motion, discouraging traveling longer distances for better information, whereas $\edRVI$ does not consider such costs.
%This result was observed for all initial conditions we attempted.
Both methods perform similarly, with our method requiring slightly less observations.
The important distinction, however, is that for every new initial condition, any tree-based planner, including $\edRVI$, needs to be run again, whereas we can reuse our optimal policy for any initial condition.
This is important in the next section, as it allows allocation of clusters dynamically in the  multi-robot team from a variety of initial conditions along the cluster boundaries.
In other words, our method allows us to compute an \emph{optimal policy}, rather than a single \emph{optimal trajectory}.

\subsection{Multiple Robots and Multiple Clusters}

\begin{figure}[t]
\centering
\includegraphics[width=7cm]{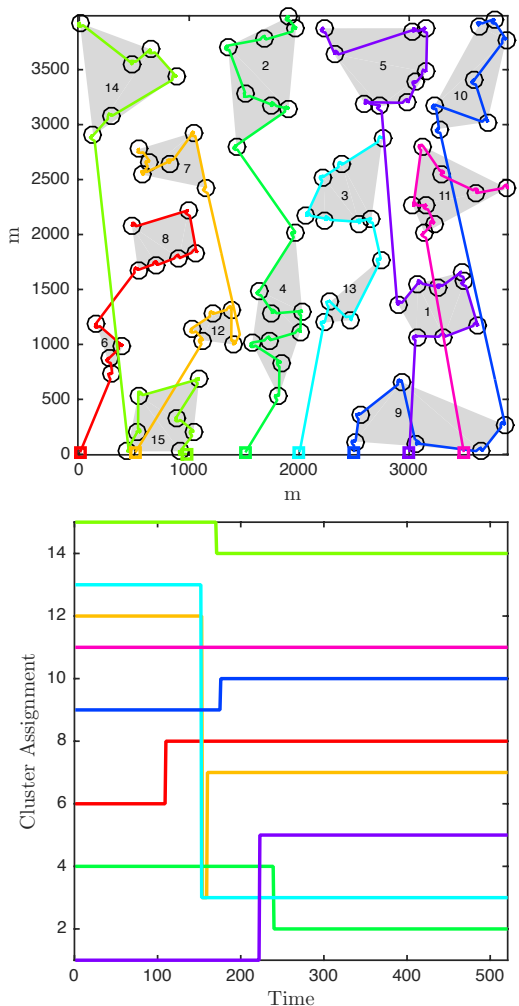} 
\caption{
Cooperative active sensing for 100 sparse landmarks and 15 clusters using 8 robots.
Top panel: Targets are denoted by circles, and clusters are shaded grey.
%Each robot starts at equal 500 m spacing along the bottom of the panel, denoted by $\square$.
Squares represent the initial locations of the robots at the bottom of the panel.
%The return trip to the depot is not shown.
Bottom panel: cluster assignment result from the online distributed auction for each of the eight robots.
Colors correspond to robot trajectories in the top panel.
Note that at time 150, the orange robot loses an auction for cluster 3 to the cyan robot, and selects a different cluster 7.
}
\label{fig:mmt}
%\vspace{-0.5cm}
\end{figure}

We present simulations of the proposed distributed estimation framework for multi-robot multi-target active localization in Fig.~\ref{fig:mmt}.
For these simulations, we uniformly randomly generated targets in a rectangle in $\reals^2.$
The communication range of the robots was set to 1500 m.
{
%The clustering strategy plays an important role in the overall performance of the algorithm.
In this simulation, first we divide the targets into two sets of roughly equal size based on the prior location estimates $\set{\hbbx_i}_{i=1}^M.$
Then, we iteratively split the sets based on relative distance until the largest cluster has less than nine targets.
%While this clustering strategy works well for the uniformly randomly distributed two dimensional targets in Figure~\ref{fig:mmt}, other clustering strategies, such as DBSCAN \cite{birant2007st}, may be more appropriate in some situations.
In our testing, we have found that a simple clustering strategy can be effectively generated using the prior over the target locations.
We leave the clustering strategy as a design choice in this algorithm that should be made once the location prior is available.
For example, in our example in Section~\ref{sec:st_results}, the sculpture garden can be naturally clustered with \emph{a priori} available tourist maps.
}

\section{Experiments}
\label{sec:exp}
%\begin{figure}[t!]
%\centering
%\includegraphics[width=0.9\columnwidth]{figures/create-close}
%\caption{One of the robots mounted with a stereo rig.
%Note the silver markers, one on top of the colored ping pong ball, and three to specify the position $\bbt$ and orientation $\bbT$ of the stereo rig.
%}
%\label{fig:roombas}
%\end{figure}

In this section, we present experiments using a team of two ground robots (iRobot Creates){\blue , r1 and r2,} that localize a set of stationary targets.
The robots each carry a stereo rig mounted atop a servo that can rotate the rig $\pm 180^\circ.$
Each rig uses Point Grey Flea3\texttrademark\, cameras with resolution $1280\times1024$.
To simulate long distance localization, all images are downsampled by rate 24 so that the effective resolution is $54\times43$. 
Each robot is equipped with an on-board computer with 8GB RAM, an Intel Core i5-3450S processor and a 802.11n wireless network card for communication. 
All implementation is done in C++ and run on Robot Operating System (ROS).
We calibrate the intrinsic and the extrinsic parameters of the stereo rig offline using the Bouget Matlab\texttrademark\; camera calibration toolbox \cite{bouguet2004camera}.
For self-localization of the robots, our laboratory is equipped with an OptiTrack\texttrademark\, array of infrared cameras that act as a motion capture system.
The OptiTrack\texttrademark\, system tracks reflective markers that are rigidly attached to the robots.
Each robot can thus retrieve its own (and only its own) position and orientation by reading a ROS topic that is broadcast over wifi by a centralized computer.
Each robot also broadcasts auction bids and states to public ROS topics that are readable by the other robots for collaboration.
Additionally, the OptiTrack\texttrademark system provides us with the ground truth locations of the targets, which are colored ping pong balls, enabling us to directly check the accuracy of the landmark localization errors after the experiment.
The information function $\bbQ$ was derived using a statistical model of stereo vision, which we defer to Appendix \ref{app:noise}.

%
%\subsection{Procedure}
%\label{sec:procedure}

\begin{figure}[t]
\centering
\includegraphics[width=7cm]{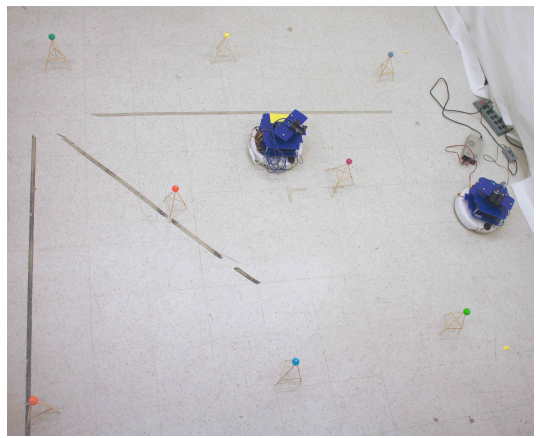}
\caption{Overhead photograph of the experimental setup
}
\label{fig:above}
\end{figure}
%
%
%\todo{
%Add comments on the procedure, e.g., what did you do in the lab to make this experiment happen?
%How was the auction carried out in the software?
%How is information exchanged between robots?
%How does the circle/color finder work?
%Mention that the robot is always pointing toward the a priori mean estimate of the target locations.}

Fig.~\ref{fig:above} shows the experimental setup for the multi robot, multi landmark localization experiment.
We place eight colored ping pong balls in a square workspace of about 2 m side length for the multi robot, multi landmark localization experiment.
We use three types of local workspaces: section, semicircle, and full circular polar grids.
To avoid collisions between robots and ping pong balls and avoid overlapping local workspaces, the minimum and maximum radii were set to 30 and 50 cm for all local workspaces, respectively.
The available poses within each local workspace were located at five equally spaced radii between these two extrema.
%Looking closely at the layout of the ping pong balls in Fig.~\ref{fig:exptraj}~(a), only the two balls in the center (targets 5 and 6) need a full circular local workspace.
The polar grids were also divided into increments of 15$^\circ$.
%For semicircular local workspaces, only twelve angles were needed, and for the targets that sit in the corners, there are just six angles partitioning their pose sections.
We set the covariance space parameters to $N_{\ccalL}=10,N_{\ccalA}=3, \kappa_\ccalL=10,$ and $  \kappa_\ccalA=7.$
We set $N_{\ccalT}$ to be twice the number of viewing angles in the local workspaces
For these experiments, we set $\rho=10^{-2}$ and $\gamma=0.9$.

%During operation, the robots move according to Algorithm \ref{alg:navigate} in Section~\ref{sec:many}. 
In `transit' mode, a potential field algorithm guides the robots to the next local workspace and avoids collisions with landmarks.
All navigation and waypoint tracking relies on a PID controller using the next waypoint as the set point. 
In the experiment, robots generally came within 2 cm of their target waypoints. 
The servo guided the stereo cameras toward the estimate of the target locations with accuracy of $\pm 1^\circ$.
%The multi robot coordination algorithm of Section~\ref{sec:many} is then run on both robots for the eight cluster localization problem.
%
%
%\todo{Yan, can you described how Algorithm \ref{alg:coordinate} was implemented, specifically reference how ROS topics are used?  What was the time interval between bids?  How often do robots check the lists?  How often to robots check the other robot's bid? Do both robots keep copies of the other robot's most recent list?
%}
%
%\todo{
%In our experiments, r1 and r2 are coordinated according to Algorithm~\ref{alg:coordinate}. 
%
Cooperation of ${{\rm r1}}$ and ${{\rm r2}}$ relies on robots updating individual ROS topics and checking neighbors' ROS topics.
The ROS topic for a given robot contains that robot's current bid value and the subtask at which the bid is directed.

\begin{figure}[t]
\centering
\includegraphics[width=6cm]{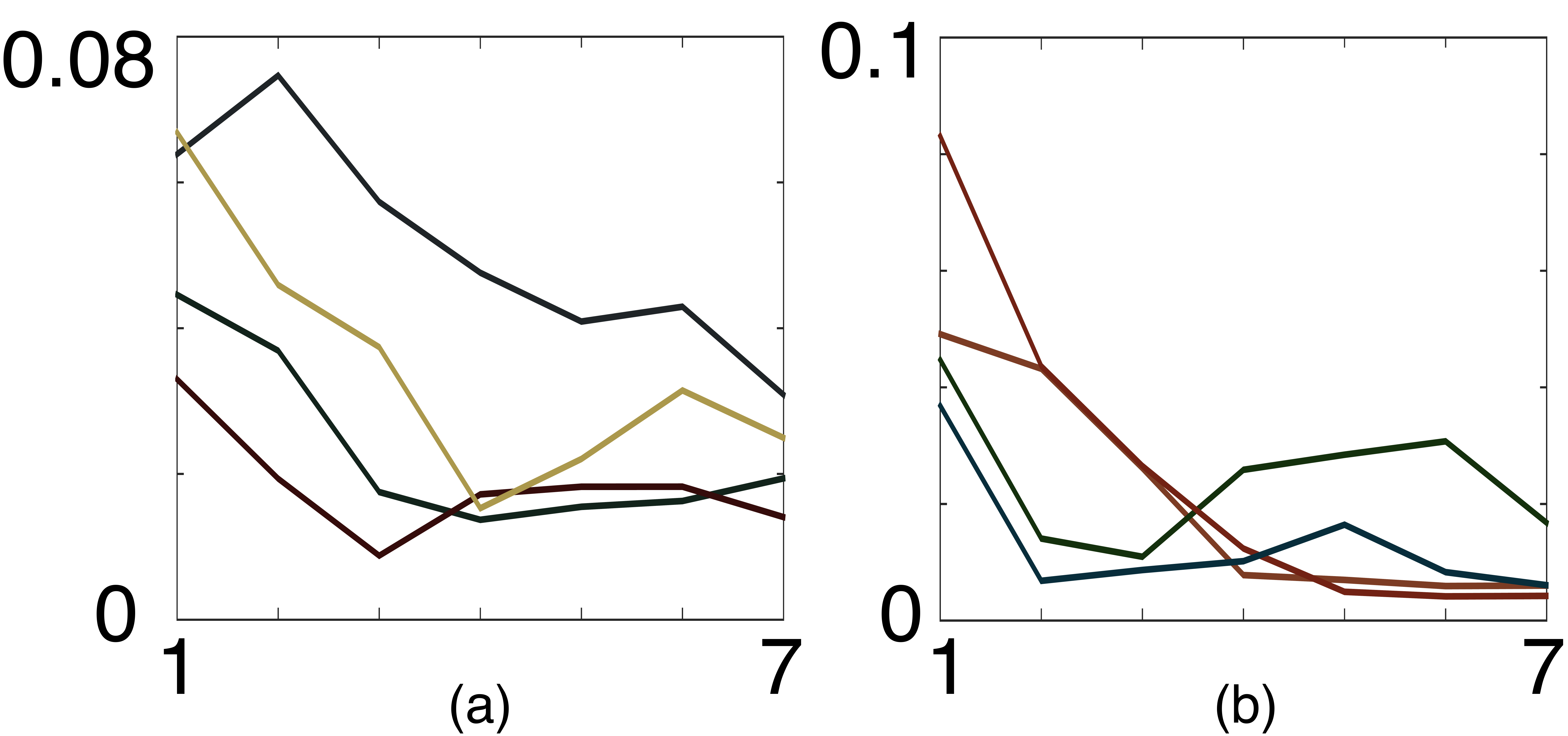}
\caption{
Plotting the filtered target localization error in meters for ${\rm r1}$ (a) and ${\rm r2}$ (b).
The horizontal axis is the number of observations for each.  This means that each robot took 28 observations total, localizing four targets each.
}
\label{fig:exptraj-err}
\end{figure}

%Fig.~\ref{fig:exptraj} shows the paths followed by the robots in one run of the experiment using the setup discussed in Section~\ref{sec:procedure}.
%The green (blue) line represents the trajectory followed by ${\rm r1}$ (${\rm r2}$).
The filtered estimates generated by observations along the paths followed by the robots in one run of the experiments are compared with the ground truth locations of the ping pong balls in Fig.~\ref{fig:exptraj-err}. 
The filtered errors are computed as the Euclidean distances between filtered estimates and the ground truth locations of the ping pong balls.
We note that any non-decreasing aspects of these plots are attributed to stochasticity in the dataset, and the fact that we are only approximating the true noise distribution with a data-driven Gaussian.
Note that, due to the relatively constrained space in our lab, the robots are not able to move as freely as in the simulations, which, in addition to unmodeled noise, is responsible for the smaller error reduction in Fig.~\ref{fig:exptraj-err} compared to the simulations.
%also say that the the error reduction is not of the same magnitude as the one observed in the simulations because here the robots observe the targets from very close distances 30-50 cm?

%\section{Computational Complexity}\label{sec:discuss}\input{tcns-text/discuss}

\section{Conclusion}\label{sec:conc}In this paper, we addressed the task of estimating a finite set of hidden state vectors that have Gaussian priors using a mobile robotic sensor network.
We framed the problem using tools from optimal control, ultimately proposing a hierarchical Dynamic Programming solution.
%The hierarchical approach solves a local DP for each hidden state that simultaneously minimizes hidden state estimation uncertainty and distance traveled.
%We determined the reduction in variance by using a sensor model and a KF to fuse observations.
{By including the covariance matrix in the local state-spaces then discretizing these spaces, we bound the computational complexity for a given error tolerance.
This is a desirable property compared to tree-based methods that actively explore the covariance space until the tolerance is reached.}
%We achieved lower computational complexity compared to tree-based methods by including the error covariance matrix of a hidden state's location in the local state-spaces.
Then, we combined the local optimal trajectories in a cluster DP that balances between reducing the uncertainty in the hidden states and traveling among configuration spaces.
Our approach is still exponential in the number of hidden states per cluster, but the proposed hierarchical scheme reduces the base of the exponential to two without sacrificing the explicit dependence on the error covariance matrices of the objective function.
Then, we proposed distributed auction algorithm to divide the tasks of sensing each cluster among multiple robots.
Simulations and experiments on real robots show that the integrated multi-robot system can efficiently localize large groups of landmarks while remaining scalable, a novel pair of characteristics that POMDP and TSP approaches do not have.

%\section{Acknowledgements}\label{sec:ack}\input{tcns-text/ack}

\bibliographystyle{ieeetr}
%\bibliography{style_and_bib_files/jan2015}
\bibliography{charlie-refs}

\appendices

\section{Noise Model}
\label{app:noise}
%\subsection{Noise Modeling}
%\label{sec:noise}

In this paper, we assume that individual measurements are subject to a known zero mean Normal noise distribution $\bbnu \sim \normal{\bb0, \bbQ}$.
In what follows, we estimate $\bbQ$ so that this assumption holds for the stereo rigs used in our laboratory experiments.
This is critical for a variety of reasons:
\begin{itemize}
\item If the mean of $\bbnu$ is biased, then the KF will not converge to the ground truth.
\item If $\bbQ$ is an under approximation to the actual covariance of $\bbnu$, then the KF will become inconsistent and will not converge to the ground truth, if it converges at all.
\item If our choice of $\bbQ$ is too conservative, it may not be informative enough to be useful in the decision process at the core of the controller.
\end{itemize}
Making things even more difficult, we want to test the system in relatively extreme conditions, particularly at long ranges, when triangulation error distributions are known to be heavy tailed, biased away from zero, and highly asymmetric \cite{freundlich15cvpr}.

%\begin{figure}[t]
% \centering
% \includegraphics[width=0.8\linewidth]{}
%\caption{Stereo geometry in 3D.  Two rays from the camera centers to a target located at $[X,Y,Z]^\top$ creates a pair of image coordinates, $(x_L,y)$ and $(x_R,y)$.}
% \label{fig:y_cord}
%%\vspace{-0.6cm}
%\end{figure}

In our experiments we make use of the physics of stereo vision \cite{ma2012invitation}.
In particular, following \cite{matthies87} we assume that pixel error are Gaussian, and we propagate them to the localization estimates via triangulation equations, which we can use to give us an accurate distribution for $\bbnu$.
Specifically, if the robot registers a correspondence in the left (L) and right (R) cameras at pixel coordinates $[x_L, x_R, y]^\top$, then the coordinates of the 3-D point that generated the match are
\begin{equation}\label{eq:triangulate}
\mat{X\\Y\\Z} 
=  \frac{b}{d} \mat{\frac{1}{2}(x_L+x_R) \\ y \\ f},
\end{equation}
where $f$ is the focal distance, $b$ is the stereo baseline, $d = x_L - x_R$ is the disparity.
Let $\bbJ$ denote the Jacobian of \eqref{eq:triangulate}.
If the error in $[x_L, x_R, y]^\top$ has covariance matrix $\bbP,$ then the error covariance of $[X,Y,Z]^\top$ is $\bbJ \bbP \bbJ^\top.$
Moreover, if $\bbt$ and $\bbT$ are the translation vector and rotation matrix from the coordinate frame of the camera to a fixed coordinate frame, then the covariance of $\bbnu$ is $\bbT \bbJ \bbP \bbJ^\top \bbT^\top.$

With this noise propagation formula in mind, we now study errors in the observed pixels.
For this we follow a data-driven approach that we have recently developed in \cite{freundlich16nbv}.
Using a set of $n=600$ pairs of training images for each robot at various ranges and viewing angles, we obtain a regression that maps pixel observations $[x_L, x_R, y]^\top$ to a corrected tuple of pixels $[x_L^c, x_R^c, y^c]^\top$ such that the error in the corrected tuple is zero mean.
For every image pair, we project the ground truth landmark onto the image planes using the inverse mapping of \eqref{eq:triangulate}, giving us $\ell = 1, \dots, n$ individual output vectors $\bbY_\ell$, which we stack into an $n \times 3$ matrix of outputs $\bbY$.
The ground truth data for this training set includes a marker placed on top of the ping pong ball, and the pose information of the stereo rig, captured by a $\bbt $ and $\bbT .$
We then compute five features and, because the data are not centered, include one constant, for each raw pixel tuple according to the model
\begin{equation}\label{eq:model}
\bbX_\ell = \left[ 
1,\,  y_\ell,\, d_\ell,\,  x_{{L, \ell}}+  x_{{R, \ell}},\,  y  d_\ell, \frac{  x_{{L, \ell}}+  x_{{R, \ell}}}{d_\ell}
\right].
\end{equation} 
These features are taken from the constituent terms in the nonlinear equation \eqref{eq:triangulate}
Stacking the $\bbX_\ell$ into an $n \times 6$ matrix, we have a linear model $\bbY = \bbX \bbbeta + \bbepsilon,$ where $\bbbeta$ is a $6 \times 3$ matrix of coefficients and $\bbepsilon$ is an $n \times 3$ matrix of zero-mean Normal errors.
We experimentally verified that the rows of $\bbepsilon$ are roughly Gaussian, as can be seen in the right panel of Fig.~\ref{fig:px_errors}.
We refer to the raw pixels as \emph{uncorrected}.
The associated error vectors (computed with respect to the uncorrected pixels and the projected ground truth) $\bbepsilon_\ell^\text{uc}$ for $\ell = 1, \dots, n$ are plotted in the left panel of Fig~\ref{fig:px_errors}.
In the scatter plot it can be seen that the mean error is nonzero, contributing average bias to individual measurements.
Also note the apparent skew of the error distribution in the vertical ($y$) direction.

Applying the ordinary least squares estimator, the maximum likelihood estimate of the coefficient matrix is $\hat{\bbbeta} = (\bbX^\top \bbX)^{-1} \bbX^\top \bbY.$
Using $\hat{\bbbeta},$ the residual covariance in the pixel measurements for the two robots, named ${{\rm r1}} \; \and \; {{\rm r2}}$, we obtained are
\agn*{
\bbP_{\rm r1} = \mat{ 
    0.13&    0.09&    0.02\\
    0.09&    0.13&   -0.03\\
    0.02&   -0.03&    0.28}\!\!,
\bbP_{\rm r2} = \mat{     
    0.22&    0.16&    0.04\\
    0.16&    0.23&    0.03\\
    0.04&    0.03&    0.74}.
}
Note that the standard deviation of the $y$ pixel value, corresponding to the variances in the lower right entries of the above matrices, is 0.53 and 0.86 pixels, respectively.
This corresponds to errors in the height of the ping pong ball center in vertical coordinates.
The right panel of Fig.~\ref{fig:px_errors} shows the residual errors in the training set $\bbepsilon_\ell$ for $\ell = 1, \dots, n$ for the corrected vector $X \hat{\bbbeta}$ on ${\rm r1}$.

For prediction, if ${\rm r1}$ makes a new observation $(x^*_{L}, x^*_{R}, y^*)$, it forms a $1 \times 6$ vector $\bbX^*$.
Then, ${\rm r1}$ calculates the corrected pixels $[x_L^c, x_R^c, y^c]^\top = \bbX^* \hat{\bbbeta},$ which are subject to zero mean Normal errors. 
Using the corrected pixels, ${\rm r1}$ triangulates the relative location of the target via \eqref{eq:triangulate}, propagates $\bbP_{\rm r1}$ via the Jacobian $\bbJ,$ rotates and translates the estimates to global coordinates, and thus the assumptions that the error terms $\bbnu$ are zero mean with covariance $\bbQ= \bbT \bbJ \bbP_{\rm r1} \bbJ^\top \bbT^\top$ are approximately satisfied.
The case for ${\rm r2}$ is analogous.

%\begin{figure}[t]
%\centering
%\begin{tabular}{c c}
%\includegraphics[width=0.46\columnwidth]{figures/leftimage_errors}
%&
%\includegraphics[width=0.46\columnwidth]{figures/right_image_errors}
%\end{tabular}
%\caption{Histograms of $\bbepsilon_\ell$ for all of the $\ell = 1, \dots, 600$ training samples for ${\rm r1}$, plotted on the left and right images in the stereo rig.
%The plots for ${\rm r2}$ are similar.
%All units in pixels
%}
%\label{fig:px_errors}
%\end{figure}
\begin{figure}[t]
\centering
\begin{tabular}{c c}
\includegraphics[height=4cm]{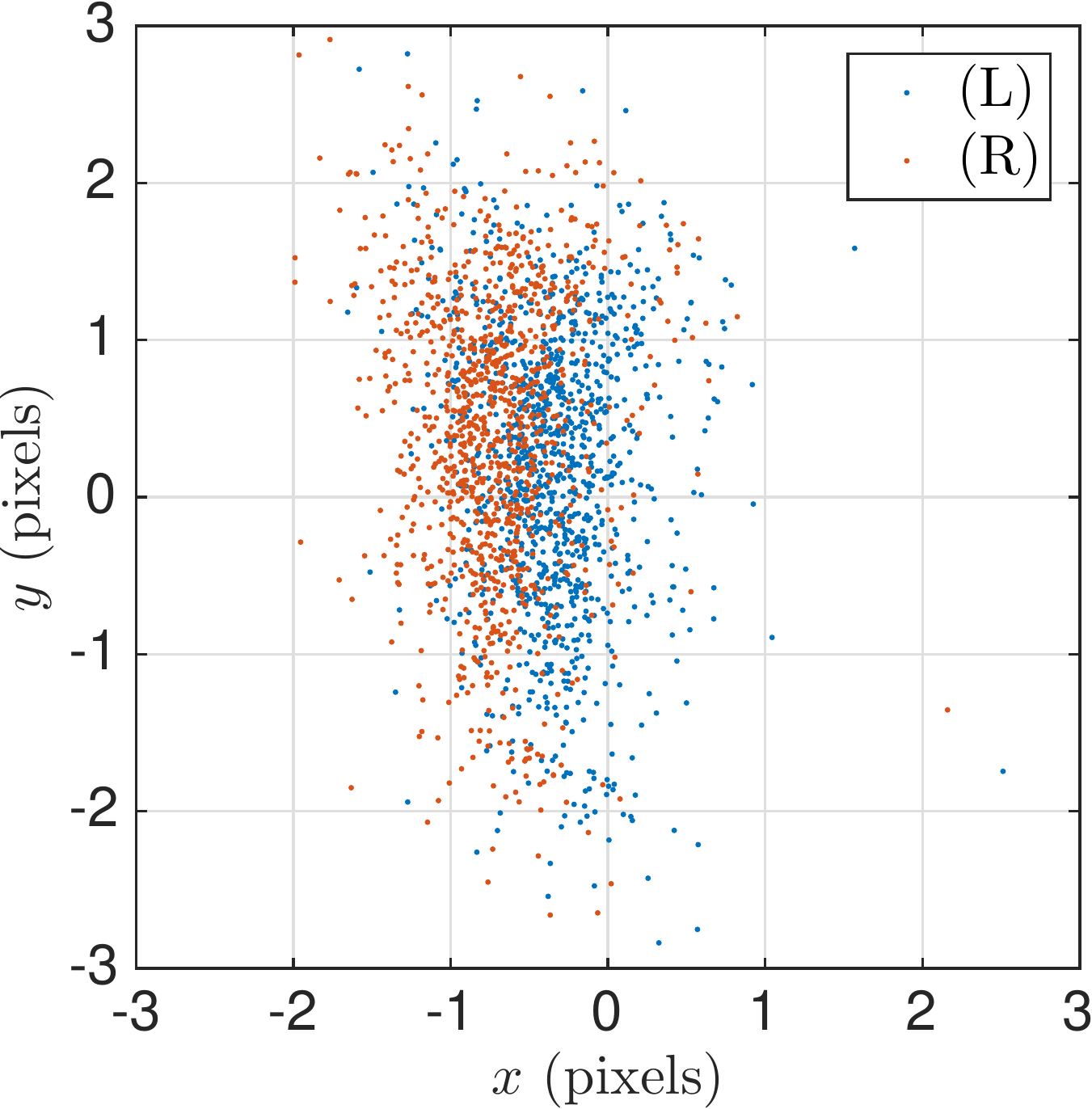}
&
\includegraphics[height=4cm]{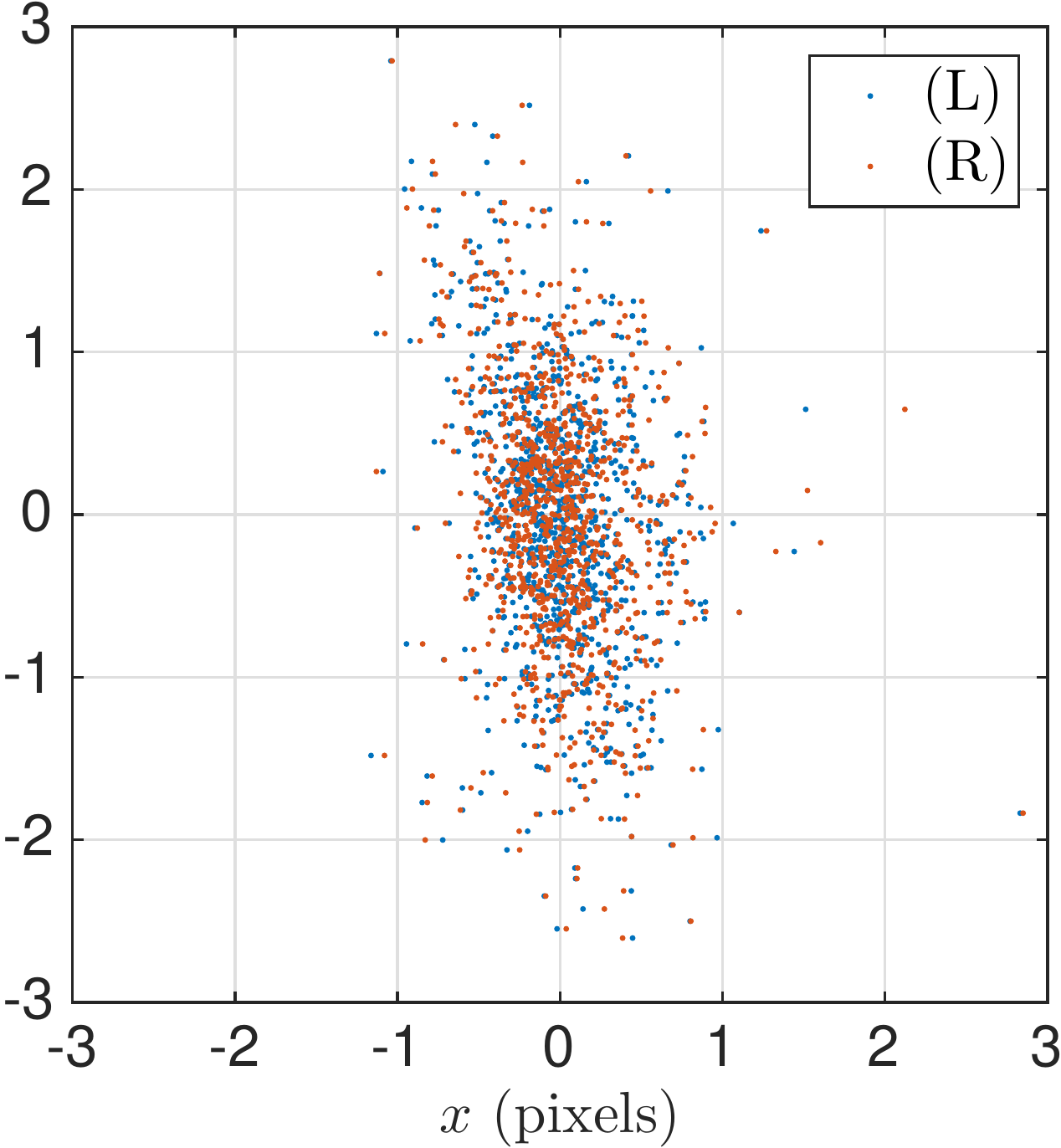}
\end{tabular}
\caption{Scatter plots of the residual errors $\bbepsilon_\ell^\text{uc}$ (left panel) and $\bbepsilon_\ell$ (right panel) for the training data for ${\rm r1}$
The plots for ${\rm r2}$ are similar.
}
\label{fig:px_errors}
\end{figure}

Finally, note that the regression takes place in three dimensions, whereas the algorithm is designed only for two dimensions.
For planning purposes, we consider only the components of $\bbQ$ that lie on the plane of the workspace, i.e., we do not use the third row and column for planning.
Of course, the experiments take place in three dimensions, so we need to use the full covariance matrices in the Kalman Filters.
Note that the algorithm proposed in this paper could be implemented in 3D at the expense of increasing the pose and covariance spaces accordingly.

\end{document}